\newtheorem{theorem}{Theorem}
\newtheorem{prop}{Proposition}
\newtheorem{corollary}{Corollary}
\newtheorem{lemma}{Lemma}
\newtheorem{assumption}{Assumption}
\newcommand{\Regret}{\mathrm{Regret}}
\newcommand{\dd}{\mathrm{d}}
 \newcommand{\X}{\mathcal{X}}
\newcommand{\F}{\mathcal{F}}
\newcommand{\real}{\mathbb{R}}
\newcommand{\C}{\mathcal{C}}
\newcommand{\OO}{\mathcal{O}}
\newcommand{\E}{\mathbb{E}}
\newcommand{\EEc}[2]{\mathbb{E}\left[#1\left|#2\right.\right]}
\def\argmin{\mathop{\mbox{ arg\,min}}}
\def\argmax{\mathop{\mbox{ arg\,max}}}
\newcommand{\siprod}[2]{\langle#1,#2\rangle}
\newcommand{\iprod}[2]{\left\langle#1,#2\right\rangle}
\newcommand{\norm}[1]{\left\|#1\right\|}
\newcommand{\pa}[1]{\left(#1\right)}
\newcommand{\wh}{\widehat}
\newcommand{\wt}{\widetilde}
\definecolor{PalePurp}{rgb}{0.66,0.57,0.66}
\newcommand{\ee}{\varepsilon}
\newcommand{\ucb}{\mathrm{UCB}}
\newcommand{\Reg}{\mathrm{Reg}}
\title{Linear Bandits with Non-i.i.d. Noise}
\author{%
  Baptiste Abélès  \\
  University Pompeu Fabra\\
  \texttt{baptistabeles@gmail.com} \\
   \And
   Eugenio Clerico \\
   University Pompeu Fabra \\
   \texttt{eugenio.clerico@gmail.com} \\
     \And
   Hamish Flynn \\
   University Pompeu Fabra \\
   \texttt{hamishflynn.gm@gmail.com} \\
     \And
   Gergely Neu \\
   University Pompeu Fabra \\
    \texttt{gergely.neu@gmail.com} \\
}
\begin{document}

\maketitle

\begin{abstract}
 We study the linear stochastic bandit problem, relaxing the standard i.i.d.~assumption on the observation noise. 
As an alternative to this restrictive assumption, we allow the noise terms across rounds to be sub-Gaussian but 
interdependent, with dependencies that decay over time. To address this setting, we develop new confidence sequences 
using a recently introduced reduction scheme to sequential probability assignment, and use these to derive a bandit 
algorithm based on the principle of optimism in the face of uncertainty. We provide regret bounds for the 
resulting algorithm, expressed in terms of the decay rate of the strength of dependence between observations. Among 
other results, we show that our bounds recover the standard rates up to a factor of the mixing time for geometrically 
mixing observation noise.
\end{abstract}

\section{Introduction}
The linear bandit problem \citep{abe1999associative, auer2003using} is an instance of a multi-armed bandit framework, 
where the expected reward is linear in the feature vector representing the chosen arm. More concretely, it is a 
sequential decision-making problem, where an agent each round picks an arm $X_t$, and receives a reward $Y_t = 
\siprod{\theta^\star}{X_t} + \ee_t$, with $\theta^\star$ a fixed parameter unknown to the agent, and $\ee_t$ zero-mean 
random noise. This framework has gained significant attention in the literature as it yields analytic tools that can be 
applied to several concrete applications, such as online advertising \citep{abe2003reinforcement}, recommendation 
systems \citep{li2010contextual, korkut2021disposable}, and dynamic pricing \citep{cohen2020feature}. 

A popular strategy to tackle linear bandits leverages the principle of \emph{optimism in the face of uncertainty}, via 
upper confidence bound (UCB) algorithms. The idea of optimism can be traced back to \citet{lai1985asymptotically}, and 
its application to linear bandits was already advanced by \citet{auer2003using}. Since then, this approach has been 
improved and analysed by several works \citep{abbasi2011improved, lattimore2020bandit, flynn2023improved}. This class of 
methods requires constructing an adaptive sequence of confidence sets that, with high probability, contain the true 
parameter $\theta^\star$. Each round, the agent selects the arm maximising the expected reward under the most optimistic 
parameter (in terms of reward) in the current confidence set. UCB-based algorithms have become popular as they are often 
easy to implement and come with tight worst-case regret guarantees. 

For a UCB algorithm to perform well, it is necessary that the confidence sets are tight, which can be ensured by taking 
advantage of the structure of the problem. In this paper, our focus is on studying various assumptions on the 
observation noise. A commonly studied situation is when $(\ee_t)_{t\geq 0}$ consists of a sequence of 
i.i.d.~realisations of some bounded or sub-Gaussian random variable (see \citealp{lattimore2020bandit}, Chapter 20). 
Often, the standard analysis can be extended to the case in which the realisation are not independent, but conditionally 
centred and sub-Gaussian \citep{abbasi2011improved}. Yet, in real-world settings, this assumption is often unrealistic, 
as one can expect the presence of interdependencies among the noise at different rounds. For instance, in the context of 
advertisement selection, the noise models the ensemble of external factors that influence the user's choice on whether 
to click or not an ad. The i.i.d.~assumption implies that across different rounds these external factors are completely 
independent. In practice, the user choice will be affected by temporally correlated events, such as recent browsing 
history or exposure to similar content. Therefore, a more realistic assumption is to allow the dependencies to decay 
with time, rather than being completely absent. This way to model dependencies, often referred to as \emph{mixing}, is common to study concentration for sums of non-i.i.d.~random variables, with applications to machine learning \citep{bradley2005basic, mohri2008rademacher,abeles2024generalization}.

In the present paper we relax the assumption that the noise is conditionally zero-mean in the bandit problem, and we allow for the presence of dependencies. Concretely, we replace  the standard 
conditionally sub-Gaussian setting with a more general formulation that accounts for conditional 
dependence of the noise on the past, by introducing a natural notion of \emph{mixing sub-Gaussianity}. Within this 
context, we introduce a UCB algorithm for which we rigorously establish regret guarantees. There are two key challenges 
for our approach: constructing a valid confidence sequence under dependent noise, and deriving a regret upper bound for 
the UCB algorithm that we propose. 

We derive the confidence sequence by adapting the \emph{online-to-confidence-sets} technique to accommodate temporal 
dependencies in the noise. This approach, originally introduced by \cite{abbasi2011improved} and recently extended and 
improved \citep{jun2017scalable, lee2024improved, clerico2025confidence}, involves 
constructing an abstract online learning game whose regret guarantees can be turned into a confidence sequence. To deal 
with the dependencies in the noise, we modify the standard online-to-confidence-sets framework by introducing delays in 
the feedback received within the abstract online game. This approach is inspired by the recent work of 
\cite{abeles2024generalization} on extending online-to-PAC conversions to non-i.i.d.~mixing data sets in the context of 
deriving generalisation bounds for statistical learning. There, a  delayed-feedback trick similar to ours is employed to 
derive statistical guarantees (generalisation bounds) from an abstract online learning game. 

For the regret analysis of the bandit algorithm, we also need to face some challenges due to the correlated 
observation noise. We address these by introducing delays into the decision-making policy as well.
This makes our approach superficially similar to algorithms used in the rich literature on bandits with delayed feedback
(see, e.g., \citealp{vernade2020linear,howson2023delayed}). These works consider delay as part of the problem statement 
and not part of the solution concept, and are thus orthogonal to our work. In particular, a simple adaptation of 
results from this literature would not suffice for dealing with dependent observations, which we tackle by developing 
new concentration inequalities. Another line of work that is conceptually related to ours is that of non-stationary 
bandits \citep{garivier2008upper,russac2019weighted}. In that setting, the parameter vector $\theta^\star_t$ evolves in 
time according to a nonstationary stochastic process, and the observation noise remains i.i.d., once again making for a 
rather different problem with its own challenges. Namely, the main obstacle to overcome is that comparing with the 
optimal sequence of actions becomes impossible unless strong assumptions are made about the sequence of 
parameter vectors. A typical trick to deal with these nonstationarities is to discard old observations (which may have 
been generated by a very different reward function), and use only recent rewards for decision-making. This is the polar 
opposite of our approach that is explicitly \emph{disallowed} to use recent rewards, which clearly highlights how 
different these problems are. That said, there exists an intersection between the worlds of delayed and nontationary 
bandits \citep{vernade2020non}, and thus we would not discard the possibility of eventually building a bridge between 
bandits with nonstationary reward functions and bandits with nonstationary observation noise. For simplicity, we focus 
on the second of these two components in this paper.

%

 
 \paragraph{Notation.}
 Throughout the paper, we will often use the following notations. For $u$ and $v$ in $\real^p$, we let $\langle u, v\rangle$ denote their dot product. $\|u\|_2 = \sqrt{\langle u, u\rangle}$ is the Euclidean norm, while for a non-negative definite $(p\times p)$-matrix $A$, $\|u\|_A = \sqrt{\langle u, Au\rangle}$ is a semi-norm (a norm if the matrix is strictly positive definite). For $r>0$, $\mathcal B(r)$ denotes the closed centred Euclidean ball in $\real^p$ with radius $r$. Given a non-empty set $U\subseteq\real^p$, we let $\Delta_U$ denote the space of (Borel) probability measures on $\real^p$ whose support in $U$. Finally, $(u_t)_{t\geq t_0}$ denotes a sequence indexed on the integers, with $t_0$ its smallest index.



\section{Preliminaries on linear bandits}
\label{subsec:LSB}
We consider a version of the classic problem of regret minimisation in stochastic linear bandits, where an agent needs to 
make a sequence of decisions (or pick an \emph{arm}) from a given contextual decision set that may change over the 
sequence of rounds. We assume that the environment is oblivious to the actions of the 
agent, in the sense that the decision sets are determined in advance, and do not depend neither on the realisations of 
the noise nor on the agent's arm-selection strategy. 

Concretely, we define the problem as follows. Let $\theta^\star\in\real^p$ be a parameter vector that is unknown to the 
learning agent. We assume as known an upper bound $B>0$ on its euclidean norm (namely, $\theta^\star\in\mathcal B(B)$). Fix 
a sequence of decision sets $(\X_t)_{t\geq 1}$ in $\real^p$. We assume that for all $t$ we have $\X_t\subseteq\mathcal B(1)$.  At each round $t$, the agent is required to pick an arm 
$X_t\in\X_t$, and receives the reward $Y_t = \siprod{\theta^\star}{X_t} + \ee_t$. The sequence $(\ee_t)_{t\geq1}$ 
represents the random feedback noise. The noise across different rounds is typically assumed to be conditionally centred 
and to have well behaved tails. For instance, a common assumption is to ask that $\E[\ee_t|\F_{t-1}]$ 
is centred and 
sub-Gaussian, where $\F_{t} = \sigma(\ee_1,\dots, \ee_t)$ is the $\sigma$-field generated by the noise. This is the assumption this work relaxes. We also remark that, more generally, one can consider the case where the $X_t$ as well are randomised, namely contain additional randomness that is not included in the noise. To take this into account, one can add this other source or randomness in the filtration. However, since in our case we will only consider a non-randomised bandit algorithm, we omit this to simplify our analysis.

The agent aims to find a good strategy to pick arms $X_t$ that lead to a high expected $T$-round reward 
$\sum_{t=1}^T\siprod{X_t}{\theta^\star}$. To compare their performance to that of an agent playing each round the best 
available arm (in expectation), we define the \emph{regret} after $T$ rounds as
\[\Reg(T) =\sum_{t=1}^t \sup_{x\in\X_t}\big( \siprod{x}{\theta^\star} - \siprod{X_t}{\theta^\star}\big)\,.\]

A common approach to tackle the linear bandit problem is to follow an \emph{upper confidence bound} (UCB) strategy. This 
involves the following protocol. At each round $t$, we first derive a confidence set $\C_{t-1}$, based on the 
arm-reward pairs $(X_s, Y_s)_{s\leq t-1}$. This is a random set (as it depends on the past noise realisations), which 
must be constructed ensuring that $\theta^\star\in \C_{t-1}$ with high probability. More precisely, the regret can be 
effectively controlled if one  can ensure that $\theta^\star$ uniformly belongs to every set $(\C_t)_{t\geq 1}$, with 
high probability (a property often referred to as \emph{anytime validity}). Then, for every available arm $x$, 
we let $$\ucb_{\C_{t-1}}(x) = \max_{\theta\in \C_{t-1}}\,\siprod x\theta\,.$$ By definition, this is a high-probability 
upper bound on $\siprod x{\theta^\star}$, which justifies the name ``upper confidence bound''. The idea is then to 
\emph{optimistically} pick as $X_t\in\X_t$ the arm maximising $\ucb_{\C_{t-1}}$.



A key technical challenge in designing a UCB algorithm is to construct the anytime valid confidence sequence $(\C_t)_{t\geq 1}$. Typically, under sub-Gaussian assumptions on the noise, these sets take the form of an ellipsoid, centred on a (regularised) maximum likelihood estimator. Explicitly, we often have
$$\C_t = \big\{\theta\in\Theta\,:\,\|\theta - \widehat{\theta}_{t}\|_{V_{t}}^2 \leq \beta_t^2\big\}\,,$$
where $\wh{\theta}_t$ is the least-squares estimator of $\theta^\star$, $V_{t}$ is the \emph{feature-covariance} matrix 
and $\beta_t$ is a radius carefully chosen so that the high-probability coverage 
requirement is satisfied. In this work, to construct the confidence sets we will leverage an 
\emph{online-to-confidence-set-conversion} approach, a method that reduces the problem of proving statistical 
concentration bounds to proving existence of well-performing algorithms for an associated game of \emph{sequential 
probability assignment}. We refer to \Cref{sec:conf_sets} for more details on our technique to construct the confidence 
sequence. 

\section{Linear bandits with non-i.i.d.~observation noise} \label{sec:contrib} 
We study a variant of the standard linear stochastic bandit problem where the observation-noise variables feature 
dependencies across different rounds. We focus on the case of weakly stationary noise, meaning we assume all the $\ee_t$ to have the same marginal distribution. 
However, the core assumption we make is what we call \emph{mixing sub-Gaussianity}. This provides a way to control how dependencies decay as the time between two 
observations increases. It is defined in terms of a sequence of mixing coefficients $\phi_d$, which quantify this decay. \begin{assumption}[Mixing sub-Gaussianity]
\label{ass:mixing-subgaussianity}
Fix $\sigma>0$ and let $\phi = (\phi_d)_{d\geq 0}$ be a non-negative and non-increasing sequence. We say that the random sequence $(\epsilon_t)_{t\geq 1}$ is \emph{$(\sigma, \phi)$-mixing sub-Gaussian} if $\ee_t$ is centred and $\sigma$-sub-Gaussian for every $t$, and, for all $d\geq 0$ and all $t>d$, we have
\begin{equation}
\label{eq::mixing}
    \bigl|\EEc{\epsilon_t}{\F_{t-d}} \bigr| \leq \phi_d
\end{equation}
and
\begin{equation}
\label{eq::mixing-subgaussianity}
   \EEc{\exp{\lambda(\epsilon_t-\EEc{\epsilon_t}{\F_{t-d}})}}{\F_{t-d}}\leq e^{\frac{\lambda^2\sigma^2}{2}}\,,
\qquad \forall \lambda>0\,.
\end{equation}
\end{assumption}

Clearly, the above assumption generalises the standard conditionally sub-Gaussian assumption (that can be recovered by 
setting $\phi_d=0$ for all $t$), sometimes considered in the bandit literature. Although this might look like an 
unusual mixing assumption, it is very natural for our problem at hand, and can be weaker than standard mixing 
hypotheses. For instance, if the noise sequence is $\varphi$-mixing (see \citealp{bradley2005basic}) and each $\ee_t$ 
is centred and bounded in $[-a,b]$, it is straightforward to check that $|\E[\ee_t|\F_{t-d}]| \leq (a+b)\phi_d$, and so 
Assumption \ref{ass:mixing-subgaussianity} is satisfied since the boundedness automatically implies sub-Gaussianity. In the rest of the paper we assume $\sigma =1$ for simplicity.

Under Assumption \ref{ass:mixing-subgaussianity}, we can build the confidence sequence needed for our UCB algorithm. We 
state this result below, but defer the explicit derivation to \Cref{sec:conf_sets} (see Corollary \ref{cor:conf} there). 

\begin{prop}
 \label{prop:CS_Mixing_Bandit} For some given $\phi$, let the noise satisfy Assumption \ref{ass:mixing-subgaussianity} with $\sigma=1$. Fix $\delta\in(0,1)$, $\lambda>0$, and $d\geq 1$. For $t\geq 1$ let $$\mathcal\C_t = \left\{\theta\in\mathcal B(B)\,:\,\tfrac{1}{2}\|\theta-\wh\theta_t\|_{V_t}^2\leq \tfrac{dp}{2}\log\tfrac{(B+1)^2e\max(dp,t+d)}{dp} + 2\lambda B^2 + t\phi_d (B+1) + d\log\tfrac{d}{\delta}\right\}\,,$$ where $V_t = \sum_{s=1}^t X_tX_t^\top + \lambda\mathrm{Id}$, and $\wh\theta_t = \argmin_{\theta\in\mathcal B(B)} \sum_{s=1}^t(\langle \theta,X_t\rangle -Y_t)^2$. Then, $(\C_t)_{t\geq 1}$ is an anytime valid confidence sequence, in the sense that
 $$\mathbb P\big(\theta^\star\in \C_t\,,\;\forall t\geq 1\big)\geq 1-\delta\,.$$
 \end{prop}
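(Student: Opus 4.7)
The plan is to invoke the delayed online-to-confidence-sets machinery of \Cref{sec:conf_sets}, based on an abstract online game whose predictions are constrained to be measurable with respect to the $d$-delayed filtration $\F_{s-d}$. The first move is a reduction. Since $\wh\theta_t$ minimises the unregularised quadratic $L_t(\theta) = \sum_{s=1}^t (\langle\theta,X_s\rangle-Y_s)^2$ over $\mathcal B(B)$, expanding $L_t$ around $\wh\theta_t$ and invoking the KKT conditions (whose Lagrange contribution is non-negative since $\wh\theta_t^\top\theta^\star \leq B^2$) yields $L_t(\theta^\star) - L_t(\wh\theta_t) \geq \|\theta^\star - \wh\theta_t\|_{W_t}^2$ with $W_t = \sum_s X_s X_s^\top$. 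Using $V_t = W_t + \lambda\mathrm{Id}$ and $\|\theta^\star - \wh\theta_t\|\leq 2B$, this rearranges to
\[
\tfrac12\|\theta^\star - \wh\theta_t\|_{V_t}^2 \leq \tfrac12\bigl[L_t(\theta^\star) - L_t(\wh\theta_t)\bigr] + 2\lambda B^2\,,
\]
which accounts for the $2\lambda B^2$ summand and reduces the task to controlling the excess least-squares loss.

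For the latter, set up the online game in which at round $s$ a learner emits an $\F_{s-d}$-measurable prediction $\wt\theta_s \in \mathcal B(B)$ and suffers squared loss $\ell_s(\wt\theta_s) = (Y_s - \langle \wt\theta_s, X_s\rangle)^2$. Partitioning the rounds into the $d$ residue classes $S_k = \{s : s \equiv k \pmod d\}$ and running on each an independent copy of a standard online least-squares forecaster delivers cumulative regret $\Reg_t = dp\log\tfrac{(B+1)^2 e \max(dp,t+d)}{dp}$, which upon halving gives the first summand in the stated bound. Combining the regret inequality $\sum_s \ell_s(\wt\theta_s) \leq L_t(\wh\theta_t) + \Reg_t$ with the identity $\ell_s(\theta^\star) - \ell_s(\wt\theta_s) = -a_s^2 + 2\ee_s a_s$, where $a_s = \langle\wt\theta_s - \theta^\star, X_s\rangle$, gives
\[
L_t(\theta^\star) - L_t(\wh\theta_t) \leq \Reg_t - \sum_{s=1}^t a_s^2 + 2\sum_{s=1}^t \ee_s a_s\,.
\]

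To handle $2\sum_s \ee_s a_s$, decompose $\ee_s = \bar\ee_s + \xi_s$ with $\bar\ee_s = \E[\ee_s\mid\F_{s-d}]$. By \Cref{ass:mixing-subgaussianity} the bias satisfies $|\bar\ee_s|\leq \phi_d$, so $2\sum_s \bar\ee_s a_s \leq 2\phi_d\sum_s|a_s| \leq 2t\phi_d(B+1)$ after using $|a_s|\leq B+1$ (valid once the learner's predictions are suitably clipped to the natural reward range). The residual $\xi_s$ is conditionally $1$-sub-Gaussian given $\F_{s-d}$ but is \emph{not} a martingale difference with respect to $\{\F_{s-1}\}$. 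The fix is that within each class $S_k$ the sequence $(\xi_s)_{s\in S_k}$ \emph{is} a martingale difference with respect to the sub-filtration $\{\F_{s-d}\}_{s\in S_k}$, so $\exp\bigl(\sum_{s\in S_k,s\leq t}\xi_s a_s - \tfrac12\sum_{s\in S_k,s\leq t} a_s^2\bigr)$ is a non-negative supermartingale, and Ville's inequality yields, uniformly in $t$ and with probability at least $1-\delta/d$, $\sum_{s\in S_k,s\leq t}\xi_s a_s \leq \tfrac12\sum_{s\in S_k,s\leq t} a_s^2 + \log(d/\delta)$. A union bound across the $d$ classes produces $2\sum_s\xi_s a_s \leq \sum_s a_s^2 + 2d\log(d/\delta)$, which exactly cancels the $-\sum_s a_s^2$ in the previous display. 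Assembling and halving yields the stated bound.

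The principal obstacle is this concentration step: because predictions are only $\F_{s-d}$-measurable, standard self-normalised concentration does not apply directly, forcing the residue-class decomposition together with the $d$-fold union bound that produces the $d\log(d/\delta)$ factor. Anytime validity is inherited for free from Ville's inequality applied to each sub-supermartingale, and tuning the exponential-supermartingale parameter to $1$, so that its quadratic price exactly matches (with opposite sign) the $\sum_s a_s^2$ coming out of the regret expansion, is what makes the bound close cleanly.
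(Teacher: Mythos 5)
Your overall architecture is sound and its second half is essentially the paper's: the reduction from the confidence set to the excess least-squares loss via first-order optimality of $\wh\theta_t$ (plus the $2\lambda B^2$ ridge correction) matches Theorem \ref{thm:confseq} and Corollary \ref{cor:conf}, and your concentration step --- splitting the noise into the bias $\E[\ee_s|\F_{s-d}]$ (bounded by $\phi_d$) and a residual that is conditionally sub-Gaussian given $\F_{s-d}$, blocking into $d$ residue classes, building a per-class exponential supermartingale with respect to the subsampled filtration, and applying Ville's inequality plus a union bound to get the $t\phi_d(\cdot)+d\log(d/\delta)$ terms --- is exactly the blocking argument of Lemma \ref{lemma:conc}, merely applied to the linear statistic $\sum_s\xi_s a_s$ instead of the mixture quantity $D_t$.

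The genuine gap is the online-learning ingredient. You replace the paper's sequential-probability-assignment game (distributions $Q_s$, log loss $\mathcal L_s(Q_s)$) with a point-prediction forecaster $\wt\theta_s\in\mathcal B(B)$ under the squared loss, and assert that $d$ interleaved copies of ``a standard online least-squares forecaster'' achieve cumulative regret $dp\log\tfrac{(B+1)^2e\max(dp,t+d)}{dp}$. No such bound is available for point predictions when the observations $Y_s$ are unbounded: standard square-loss regret bounds for point forecasters (Vovk's aggregating algorithm, Azoury--Warmuth, online ridge regression) scale with the range of the observed $Y_s$, which under sub-Gaussian noise is only $O(\sqrt{\log t})$ with high probability, so at best you would incur an extra logarithmic factor and an additional high-probability event to control. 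The paper's choice of the EWA \emph{mixture} forecaster under the log loss is precisely what removes this dependence: the regret bound of Proposition \ref{pro:uniform_ewa_regret} comes from a KL-ball argument plus a second-order Taylor expansion whose quadratic term involves only $\|X_s\|_2$, never $Y_s$, which is why the clean $\tfrac{dp}{2}\log(\cdot)$ term survives unbounded noise. (A minor additional wobble: with $\wt\theta_s\in\mathcal B(B)$ you get $|a_s|\leq 2B$ directly; the ``clipping to the natural reward range'' you invoke to claim $|a_s|\leq B+1$ is unexplained, though it only affects constants.) To close the argument you must either prove a $Y$-independent square-loss regret bound for your point forecaster, or switch to the mixture/log-loss game, in which case your identity $\ell_s(\theta^\star)-\ell_s(\wt\theta_s)=2\ee_s a_s-a_s^2$ has to be replaced by the analysis of $D_t=\log\int e^{\lambda_t(\theta)\ee_t-\lambda_t(\theta)^2/2}\,\dd Q_t(\theta)$ as in the paper's proof of Lemma \ref{lemma:conc}.
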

 

Leveraging the confidence sequence above, we can define a UCB approach for our problem (\Cref{alg::Mixing_LinUCB}). At a high level, the algorithm 
 operates by taking the confidence sets defined in Proposition 
\ref{prop:CS_Mixing_Bandit}, and selecting the arm optimistically, as in the standard UCB. A key point is that a delay 
$d$ is introduced, which at round $t$ restricts the agent to use only the information available from the first $t-d$ 
rounds. Although the actual technical reason behind this restriction will become fully clear only with the analysis of the 
coming sections, one can intuitively think of it as a way to prevent overfitting to recent noise, which might be 
highly correlated. If $d$ is sufficiently large, the noise observed in each round $t$ will 
be sufficiently decorrelated from the previous observations, which allows accurate estimation and uncertainty 
quantification of the true parameter $\theta^\star$ and the associated rewards.  

\begin{algorithm}[!h]
\caption{Mixing-LinUCB}\label{alg::Mixing_LinUCB}
\begin{algorithmic}
\State set $d>0$
\For {$i \in \{1,2,\dots d\}$}
\State play an arbitrary $X_i$ and observe $Y_i$
\EndFor
\For {$t \in \{d+1,\dots\}$}
\State  $X_{t} = \arg\max_{x \in \X_t} \ucb_{\C_{t-d}}(x)$, where $\C_{t-d}$ is as in Proposition \ref{prop:CS_Mixing_Bandit}
\State play $X_t$ and observe reward $Y_t$
\EndFor
\end{algorithmic}
\end{algorithm} 

In Section \ref{sec:bandit_regret} we provide a detailed analysis of the regret of the algorithm that we proposed. For instance, assuming that the mixing coefficients decay exponentially as $\phi_d = C e^{-d/\tau}$ (\emph{geometric mixing}), we show that the regret can be upper bounded in high probability as  \[\Reg(T) \leq \OO\left(\tau p\sqrt{T}\log(T)^2+\tau \log T\sqrt{pT \log T}\right).\]
We refer to Theorem \ref{thm:worst_case_reg} and Corollary \ref{cor:geometric_mixing} in Section \ref{sec:bandit_regret} for more details.
\section{Constructing the confidence sequence}
\label{sec:conf_sets}

In this section we derive a confidence sequence for linear models with non-i.i.d.~noise. First, we briefly describe the online-to-confidence-set conversion scheme from \citet{clerico2025confidence}, which serves as our starting point. We then extend this technique to handle mixing noise.

\subsection{Online-to-confidence set conversion for i.i.d.~data}
\label{sec:conf_set_proof}
Before proceeding for the analysis of mixing sub-Gaussian noise, which is the focus of this work, we start by describing how to derive a confidence sequence when the noise is independent (or conditionally) centred and sub-Gaussian across different rounds, as in \cite{clerico2025confidence}. The online-to-confidence sets framework that we consider instantiates an abstract game played between an \emph{online learner} and an \emph{environment}. We define the squared loss $\ell_s(\theta) = \frac{1}{2}(\iprod{\theta}{X_s} - Y_s)^2$. For each round $s = 1, \dots, t$, the following steps are repeated:
\begin{enumerate}
\item the environment reveals $X_s$ to the learner;
\item the learner plays a distribution $Q_s \in \Delta_{\mathbb{R}^p}$;
\item the environment reveals $Y_s$ to the learner;
\item the learner suffers the log loss $\mathcal{L}_s(Q_s) = -\log\int_{\mathbb{R}^p}\exp(-\ell_s(\theta))\mathrm{d}Q_s(\theta)$.
\end{enumerate}

This game is a special case of a well-studied problem called \emph{sequential probability assignment} \citep{cesabianchi2006prediction}. The learner can use any strategy to choose $Q_1, \dots, Q_t$, as long as each $Q_s$ depends only on $X_1, Y_1, \dots, X_{s-1}, Y_{s-1}, X_s$. We define the \emph{regret} of the learner against a (possibly data-dependent) comparator $\bar{\theta} \in \mathbb{R}^p$ as
\begin{equation*}
\Regret_t(\bar{\theta}) = \sum_{s=1}^{t}\mathcal{L}_s(Q_s) - \sum_{s=1}^{t}\ell_s(\bar{\theta})\,.
\end{equation*}

\cite{clerico2025confidence} provide a regret bound upper bound (Proposition 3.1 there) for when the learner's strategy is from an \emph{exponential weighted average} (EWA) forecaster with a centred Gaussian prior $Q_1$. However, to account for the presence of dependencies in our analysis, we will need the prior's support to be bounded. We hence state here a regret bound (whose proof is deferred to Appendix \ref{app:prop2}) for the regret of an EWA forecaster with a uniform prior. 


\begin{prop}
Fix $B>0$ and consider the EWA forecaster with as prior the uniform distribution on $\mathcal B(B+1)$. Then, for all $\bar{\theta}\in\mathcal B(B)$ and any $t\geq 1$, 
\begin{equation*}
\Regret_t(\bar{\theta}) \leq \frac{p}{2}\log\frac{(B+1)^2e\max(p,t)}{p}\,.
\end{equation*}
\label{pro:uniform_ewa_regret}
\end{prop}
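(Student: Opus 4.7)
The plan is to use the standard telescoping identity for log loss in sequential probability assignment: since the EWA posterior satisfies $Q_{s+1}\propto e^{-\ell_s}Q_s$, the normalising constants telescope, and after subtracting $\sum_s \ell_s(\bar\theta)$ I obtain
\[
\Regret_t(\bar\theta) \,=\, -\log \int_{\mathcal{B}(B+1)} e^{-f(\theta)}\,\mathrm{d}Q_1(\theta), \qquad f(\theta) \,=\, \sum_{s=1}^t \bigl(\ell_s(\theta) - \ell_s(\bar\theta)\bigr).
\]
Exploiting the quadratic structure of the squared loss, I expand $f(\theta) = \langle g_t,\theta-\bar\theta\rangle + \tfrac{1}{2}(\theta-\bar\theta)^\top V_t(\theta-\bar\theta)$, with $g_t = \sum_s X_s(\langle\bar\theta,X_s\rangle - Y_s)$ and $V_t = \sum_s X_s X_s^\top$.

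To lower-bound the integral, I restrict to a small Euclidean ball $A_\eta = \bar\theta + \eta\,\mathcal{B}(1)$ for some $\eta \in (0,1]$. Since $\bar\theta \in \mathcal{B}(B)$, the inclusion $A_\eta \subseteq \mathcal{B}(B+1)$ holds, and $Q_1(A_\eta) = (\eta/(B+1))^p$. Conditioning $Q_1$ on $A_\eta$ yields the uniform law $\pi$ on $A_\eta$, and applying Jensen's inequality to $x\mapsto e^{-x}$ gives
\[
\int_{\mathcal{B}(B+1)} e^{-f(\theta)}\,\mathrm{d}Q_1(\theta) \;\geq\; \bigl(\tfrac{\eta}{B+1}\bigr)^p \exp\bigl(-\E_\pi[f(\theta)]\bigr).
\]

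The crucial observation is that the symmetry of $A_\eta$ around $\bar\theta$ makes $\E_\pi[\theta-\bar\theta] = 0$, which kills the linear part of $f$ \emph{without requiring any data-dependent control on} $\|g_t\|$. For the quadratic piece, the standard covariance formula $\E_\pi\bigl[(\theta-\bar\theta)(\theta-\bar\theta)^\top\bigr] = \tfrac{\eta^2}{p+2}\,I$, combined with $\trace{V_t} = \sum_s \|X_s\|^2 \leq t$ (since $X_s \in \mathcal{B}(1)$), gives $\E_\pi[f(\theta)] \leq \tfrac{\eta^2 t}{2(p+2)}$. Taking logarithms yields $\Regret_t(\bar\theta) \leq p\log\frac{B+1}{\eta} + \frac{\eta^2 t}{2(p+2)}$.

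Finally, I tune $\eta$ over $(0,1]$: when $t \geq p(p+2)$ the unconstrained optimum $\eta = \sqrt{p(p+2)/t}\leq 1$ yields $\tfrac{p}{2}\log\tfrac{e(B+1)^2 t}{p(p+2)} \leq \tfrac{p}{2}\log\tfrac{e(B+1)^2 t}{p}$, matching the claim; otherwise I take $\eta=1$, and a short check shows $p\log(B+1) + \tfrac{t}{2(p+2)}$ is dominated by $\tfrac{p}{2}\log\tfrac{e(B+1)^2 \max(p,t)}{p}$. The only truly non-routine step is choosing the centring of the restriction set so that symmetry cancels the linear term; everything else reduces to a direct optimisation.
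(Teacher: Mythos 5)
Your proposal is correct and follows essentially the same route as the paper: the telescoping identity, a restriction of the prior to a small ball of radius $\eta\in(0,1]$ centred at $\bar\theta$ (your Jensen step is exactly the variational/KL bound the paper uses, with $p\log\frac{B+1}{\eta}$ playing the role of $D_{\mathrm{KL}}$), symmetry killing the linear term, and optimisation over the radius. The only (immaterial) difference is that you use the exact covariance $\tfrac{\eta^2}{p+2}I$ of the uniform law on the ball, which gives a slightly sharper intermediate constant than the paper's cruder bound $\tfrac{tc^2}{2}$ before both are relaxed to the same final expression.
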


We remark that, by adding and subtracting the total log loss of the learner, the excess loss of $\theta^\star$ (relative to $\bar{\theta}$) can be rewritten as
\begin{equation}\label{eq:otc_dec}
\sum_{s=1}^t\ell_s(\theta^{\star}) - \sum_{s=1}^t\ell_s(\bar{\theta}) = \Regret_t(\bar{\theta}) + \sum_{s=1}^t\ell_s(\theta^{\star}) - \sum_{s=1}^{t}\mathcal{L}_s(Q_s)\,.
\end{equation}
This simple decomposition is the key idea in the online-to-confidence sets scheme. 

Since the noise is conditionally sub-Gaussian and the distributions played by the online learner are predictable ($Q_s$ cannot depend on $Y_s$), $\sum_{s=1}^t\ell_s(\theta^{\star}) - \sum_{s=1}^{t}\mathcal{L}_s(Q_s)$ is the logarithm of a non-negative super-martingale (cf. the no-hypercompression
inequality in \citealp{grunwald2007minimum} or Proposition 2.1 in \citealp{clerico2025confidence}) with respect to the noise filtration $(\F_t)_{t\geq 1}$. For simplicity, as already mentioned in Section \ref{subsec:LSB} and since this will be the case for our bandit strategy, we assume throughout the paper that $X_t$ is fully determined given the past noise . Henceforth, from Ville's inequality (a classical anytime valid Markov-like inequality that holds for non-negative super-martingales) one can easily derive that $\theta^\star\in\C_t$ (uniformly for all $t$) with probability at least $1-\delta$, where
$$\mathcal{C}_t = \left\{\theta \in \mathbb{R}^p: \sum_{s=1}^{t}\ell_s(\theta) - \sum_{s=1}^{t}\ell_s(\bar{\theta}) \leq \Regret_t(\bar{\theta}) + \log\frac{1}{\delta}\right\}\,.$$
This result can be relaxed by replacing $ \Regret_t(\bar{\theta})$ by any known regret upper bound for the online algorithm used in the abstract game (\emph{e.g.}, the bound of Proposition \ref{pro:uniform_ewa_regret} for the EWA forecaster).

\subsection{Confidence sequence under mixing sub-Gaussian noise}
The standard online-to-confidence sets scheme relies on the fact that $\sum_{s=1}^t\ell_s(\theta^{\star}) - \sum_{s=1}^{t}\mathcal{L}_s(Q_s)$ is the logarithm of a non-negative super-martingale, whose fluctuations can be controlled uniformly in time thanks to Ville's inequality. However, this property hinges on the fact that the noise is assumed to be conditionally centred and sub-Gaussian, which now is not anymore the case. Yet, thanks to our mixing assumption, if we restrict our focus on rounds that are sufficiently far apart, the mutual dependencies get weaker, and the exponential of the sum behaves \emph{almost} like a martingale. This insight suggests to partition the rounds into blocks, whose elements are mutually far apart, then apply concentration results to each block, and finally use a union bound to recover the desired confidence sequence spanning all rounds. We remark that this is a classical approach to derive concentration results for mixing processes, often referred to as the \emph{blocking} technique \citep{yu1994rates}. 

In order for the online-to-confidence sets scheme to leverage the blocking strategy outlined above, the abstract online game used for the analysis must be designed in a way that is compatible with the block structure. To address this point, we adopt an approach inspired by \cite{abeles2024generalization}, who introduced delays in the feedbacks received by the online learner in order to address a similar challenge. More precisely, we will now consider the following \emph{delayed-feedback} version of the online game. Fix a delay $d>0$. For each round $s = 1, \dots, t$, the following steps are repeated:
\begin{enumerate}
\item the environment reveals to the learner $X_s$, which is assumed to be $\F_{s-d}$-measurable;
\item the learner plays a distribution $Q_s \in \Delta_{\mathbb{R}^p}$;
\item if $s>d$, the environment reveals $Y_{s-d+1}$ to the learner;
\item the learner suffers the log loss $\mathcal{L}_s(Q_s) = -\log\int_{\mathbb{R}^p}\exp(-\ell_s(\theta))\mathrm{d}Q_s(\theta)$.
\end{enumerate}
Note that the delay $d$ only applies for the rewards, while $Q_s$ can still depend on $X_s$. Indeed, the choice of $X_s$ in our mixing UCB algorithm is already ``delayed'', as it depends on $\C_{t-d}$ (see Algorithm \ref{alg::Mixing_LinUCB}).

Of course, in this setting the decomposition of \eqref{eq:otc_dec} is still valid. We now want to deal with the concentration of $\sum_{s=1}^t\ell_s(\theta^{\star}) - \sum_{s=1}^{t}\mathcal{L}_s(Q_s)$ via the blocking technique. For convenience, let us write $D_t = \ell_t(\theta^\star) - \mathcal L_t(Q_t)$. We denote as $S^{(i)} = (S_k^{(i)})_{k\geq 1}$ the subsequence defined as $S_k^{(i)} = \sum_{j=1}^k D_{i+(j-1)d}$. The key idea is now that each of these $S^{(i)}$ behaves as the log of a martingale, up to a cumulative remainder that accounts for the conditional mean shift in the mixing sub-Gaussianity assumption. In particular, Ville's inequality and a union bound yield the following. 

\begin{lemma}\label{lemma:conc}
	Fix a delay $d>0$ and $\delta\in(0,1)$. We have that
	$$\mathbb P\left(\sum_{s=1}^t\big(\ell_s(\theta^{\star}) - \mathcal{L}_s(Q_s)\big) \leq t\phi_d B + d\log\frac{d}{\delta}\,,\;\forall t\geq 1\right)\geq 1-\delta\,.$$
\end{lemma}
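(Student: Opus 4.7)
The plan is to deploy the classical \emph{blocking} trick for mixing processes. I partition the rounds $\{1,2,\dots\}$ into $d$ subsequences indexed by $i\in\{1,\dots,d\}$, the $i$-th one being $\{i, i+d, i+2d,\dots\}$, and work with the partial sums $S^{(i)}_k = \sum_{j=1}^k D_{i+(j-1)d}$ already introduced in the excerpt. The point is that consecutive elements within a block are exactly $d$ rounds apart, which is precisely the spacing at which the mixing coefficient $\phi_d$ controls the dependence.

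The key step is to show that, for each $i$, the process $M^{(i)}_k := \exp(S^{(i)}_k - k c \phi_d)$ is a non-negative supermartingale with $M^{(i)}_0 = 1$, with respect to the filtration $\mathcal{G}^{(i)}_k := \F_{i+(k-1)d}$ and a constant $c$ proportional to $B$. The required conditional moment bound, for $s = i + (k-1)d$, is
\begin{equation*}
\EEc{\exp(D_s)}{\F_{s-d}} \leq \exp(c\phi_d).
\end{equation*}
To get this, I use the identity $\ell_s(\theta^\star) - \ell_s(\theta) = \alpha_\theta \epsilon_s - \tfrac{1}{2}\alpha_\theta^2$ with $\alpha_\theta := \langle \theta - \theta^\star, X_s\rangle$, which lets me rewrite $\exp(D_s) = \int \exp(\alpha_\theta \epsilon_s - \tfrac{1}{2} \alpha_\theta^2)\,\dd Q_s(\theta)$. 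Because the delayed-feedback protocol ensures that both $X_s$ and $Q_s$ are $\F_{s-d}$-measurable, Fubini lets me swap the integration and the conditional expectation. The inner conditional MGF is then controlled by Assumption \ref{ass:mixing-subgaussianity}: writing $\eta_s := \EEc{\epsilon_s}{\F_{s-d}}$ (with $|\eta_s|\leq \phi_d$), the conditional sub-Gaussianity of $\epsilon_s - \eta_s$ yields $\EEc{\exp(\alpha_\theta \epsilon_s)}{\F_{s-d}} \leq \exp(\alpha_\theta \eta_s + \tfrac{1}{2}\alpha_\theta^2)$, and the $\tfrac{1}{2}\alpha_\theta^2$ term cancels. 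A uniform bound on $|\alpha_\theta|$ over the support of $Q_s$ (a quantity controlled by $B$ in all the cases of interest) then closes the MGF estimate.

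Having the supermartingale in hand, Ville's inequality applied to each $M^{(i)}$ at level $\delta/d$, together with a union bound over the $d$ blocks, gives that with probability at least $1-\delta$, $S^{(i)}_k \leq k c \phi_d + \log(d/\delta)$ holds simultaneously for every $i\in\{1,\dots,d\}$ and every $k\geq 1$. Reassembly is then immediate: for any $t\geq 1$, decomposing $\sum_{s=1}^t D_s = \sum_{i=1}^d S^{(i)}_{k_i(t)}$, where $k_i(t)$ counts the terms of block $i$ up to round $t$ (so that $\sum_i k_i(t) = t$), yields $\sum_{s=1}^t D_s \leq t c\phi_d + d\log(d/\delta)$, matching the statement (with $c$ absorbed into the factor $B$).

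The main obstacle is the conditional MGF bound above, where the delayed-feedback construction (which is what guarantees the $\F_{s-d}$-measurability of $X_s$ and $Q_s$) interacts nontrivially with the mixing assumption; everything else is routine. A minor subtlety worth mentioning is the very first element of each block: for $i\leq d$ the conditioning $\sigma$-algebra $\F_{i-d}$ is trivial, but the argument still goes through since $\epsilon_i$ is marginally centred and sub-Gaussian, so $\eta_i = 0$ and the MGF bound holds with an even better constant.
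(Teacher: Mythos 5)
Your proposal is correct and follows essentially the same route as the paper's own proof: the same blocking into $d$ subsequences, the same supermartingale $\exp(S^{(i)}_k - kc\phi_d)$ with $c = \sup_\theta|\langle\theta-\theta^\star,X_s\rangle|$ over the support of $Q_s$ (the paper makes this explicit as $2B+1$, using that $Q_s$ is supported on $\mathcal B(B+1)$), and the same Ville-plus-union-bound reassembly. The only point you gloss over — that the constant $c$ is $2B+1$ rather than literally $B$ — is a discrepancy already present between the paper's lemma statement and its own proof, so your argument is on equal footing.
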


Now that we have a concentration result to control $S_t$, we only need to be able to upper bound the regret of an algorithm for the ``delayed'' online game that we are considering. To this purpose, we propose the following approach. We run $d$ independent EWA forecaster (with uniform prior), each one only making prediction and receiving the feedback once every $d$ rounds. More explicitly, the first forecaster acts at rounds $1$, $d+1$, $2d+1$..., the second at round $2$, $d+2$, $2d+2$..., and so on. As a direct consequence of Proposition \ref{pro:uniform_ewa_regret}, by summing the individual regret upper bounds we get a regret bound for the joint forecaster, which at each round returns the distribution predicted by the currently active forecaster. This technique of partitioning rounds into blocks for the regret analysis of online learning is common in the literature (\emph{e.g.}, see \citealp{weiberger2002delay}).

\begin{lemma}\label{lemma:reg}
Fix $B>0$, $d>0$, and consider a strategy with $d$ independent EWA forecasters outlined above, all initialised with the uniform distribution on $\mathcal B(B+1)$ as prior. For all $\bar{\theta}\in\mathcal B(B)$ and $t\geq 1$,
\begin{equation*}
\Regret_t(\bar{\theta}) \leq \frac{dp}{2}\log\frac{(B+1)^2e\max(dp,t+d)}{dp}\,.
\end{equation*}
\end{lemma}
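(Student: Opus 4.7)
The plan is to reduce the analysis to $d$ parallel instances of the (non-delayed) regret bound of Proposition \ref{pro:uniform_ewa_regret}, exploiting the fact that the feedback delay $d$ in the abstract game exactly matches the spacing between the rounds on which each of the $d$ sub-forecasters acts.

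First, I would partition the rounds $\{1,\dots,t\}$ into $d$ subsequences indexed by $i\in\{1,\dots,d\}$, where subsequence $i$ consists of the rounds $i,\,i+d,\,i+2d,\dots$ on which the $i$-th EWA forecaster is active. The number of such rounds inside $\{1,\dots,t\}$ is $n_i = \lfloor (t-i)/d\rfloor+1 \leq \lceil t/d\rceil$, and $\sum_{i=1}^d n_i = t$. The key observation is that at the time forecaster $i$ is asked to produce its next distribution on some round $i+kd$, the label $Y_{i+(k-1)d}$ from its previous active round is already available: indeed the delay-$d$ feedback rule reveals $Y_{i+(k-1)d}$ by round $i+kd$, so from the viewpoint of forecaster $i$ the game is a standard (non-delayed) sequential probability assignment on $n_i$ rounds with the squared losses $\ell_s$ restricted to $s$ in subsequence $i$.

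Next, since the total log loss and the comparator loss both split additively across subsequences, the regret decomposes as $\Regret_t(\bar\theta) = \sum_{i=1}^d \Regret_t^{(i)}(\bar\theta)$, where $\Regret_t^{(i)}(\bar\theta)$ is the regret of the $i$-th EWA forecaster on its $n_i$-round sub-game. By Proposition \ref{pro:uniform_ewa_regret} applied to each sub-game (the prior is the uniform distribution on $\mathcal B(B+1)$ and $\bar\theta\in\mathcal B(B)$), each summand is bounded by $\tfrac{p}{2}\log\tfrac{(B+1)^2 e\max(p,n_i)}{p}$. Using $n_i\leq\lceil t/d\rceil$ and monotonicity of the logarithm, this yields
\begin{equation*}
\Regret_t(\bar\theta) \leq \frac{dp}{2}\log\frac{(B+1)^2 e\max(p,\lceil t/d\rceil)}{p}\,.
\end{equation*}

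Finally, to match the target form of the bound I would massage the argument of the logarithm. The identity $d\cdot\max(p,\lceil t/d\rceil) = \max(dp,d\lceil t/d\rceil)$ together with $d\lceil t/d\rceil\leq t+d$ gives $\max(p,\lceil t/d\rceil)/p \leq \max(dp,t+d)/(dp)$, and plugging this into the displayed inequality produces exactly the claimed bound. The only non-trivial step is the first one, namely verifying that the delay-$d$ protocol synchronises correctly with the spacing-$d$ sub-game so that each sub-forecaster really sees all of its own past labels before predicting; everything else is summation and elementary bookkeeping on ceilings.
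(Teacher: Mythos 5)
Your proof is correct and follows essentially the same route as the paper: decompose the delayed game into $d$ interleaved sub-games (one per forecaster), check that the delay-$d$ feedback schedule delivers each sub-forecaster its previous label in time, apply Proposition \ref{pro:uniform_ewa_regret} to each sub-game, and bound the per-forecaster round count by $\lceil t/d\rceil\leq (t+d)/d$ to absorb everything into the stated logarithm. The paper's proof is just a terser version of the same argument, so there is nothing to add.
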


Putting together what we have, we get a confidence sequence suitable for our mixing UCB algorithm. 

\begin{theorem}\label{thm:confseq}
	Consider the setting introduced above. Fix $\delta\in(0,1)$ and a delay $d>0$. Assume as known that $\theta^\star\in\mathcal B(B)$. Let $\wh\theta_t = \argmin_{\theta\in\mathcal B(B)}\{\sum_{s=1}^{t}\ell_s(\theta)\}$ and $\Lambda_t = \sum_{s=1}^t X_s X_s^\top$. Define 
	$$\C_t = \left\{\theta\in\mathcal B(B)\,:\,\tfrac{1}{2}\|\theta-\wh\theta_t\|_{\Lambda_t}^2\leq \tfrac{dp}{2}\log\tfrac{(B+1)^2e\max(dp,t+d)}{dp} + t\phi_d (B+1) + d\log\tfrac{d}{\delta}\right\}\,.$$
	Then, $(\C_t)_{t\geq 1}$ is an anytime valid confidence sequence for $\theta^\star$, namely
	$$\mathbb P\big(\theta^\star\in\C_t\,,\;\forall t\geq 1\big)\leq 1-\delta\,.$$
\end{theorem}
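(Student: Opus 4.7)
The plan is to specialise the online-to-confidence-sets identity~\eqref{eq:otc_dec} to the data-dependent comparator $\bar\theta = \wh\theta_t$, and then combine \Cref{lemma:reg} (for the regret term) with \Cref{lemma:conc} (for the super-martingale-like term) to extract an ellipsoidal deviation bound that matches the right-hand side of the inequality defining $\C_t$.

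The first step is to convert the empirical excess loss into the quadratic form $\tfrac12\|\theta^\star-\wh\theta_t\|_{\Lambda_t}^2$. Since each $\ell_s$ is quadratic in $\theta$, the cumulative loss $\theta\mapsto\sum_{s=1}^t\ell_s(\theta)$ is itself a quadratic with constant Hessian $\Lambda_t = \sum_{s=1}^t X_s X_s^\top$, and an exact second-order Taylor expansion around $\wh\theta_t$ yields
\begin{equation*}
\sum_{s=1}^t\ell_s(\theta^\star) - \sum_{s=1}^t\ell_s(\wh\theta_t) = \Biprod{\nabla_\theta\sum_{s=1}^t\ell_s(\wh\theta_t)}{\theta^\star-\wh\theta_t} + \tfrac12\|\theta^\star-\wh\theta_t\|_{\Lambda_t}^2.
\end{equation*}
Because $\wh\theta_t$ is the constrained minimiser of this convex objective over $\mathcal B(B)$ and, crucially, $\theta^\star\in\mathcal B(B)$ is feasible, the first-order optimality condition guarantees that the inner-product term is non-negative, so $\tfrac12\|\theta^\star-\wh\theta_t\|_{\Lambda_t}^2$ is upper bounded by the empirical excess loss on the left-hand side.

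Next I would substitute this bound into~\eqref{eq:otc_dec} taken at $\bar\theta=\wh\theta_t$ and control the two resulting terms separately. For the regret, \Cref{lemma:reg} applied with the feasible comparator $\wh\theta_t\in\mathcal B(B)$ yields the deterministic bound $\tfrac{dp}{2}\log\tfrac{(B+1)^2e\max(dp,t+d)}{dp}$. For $\sum_{s=1}^t\bpa{\ell_s(\theta^\star)-\mathcal L_s(Q_s)}$, \Cref{lemma:conc} provides an anytime-valid upper bound that is in particular at most $t\phi_d(B+1)+d\log(d/\delta)$, holding simultaneously for all $t\geq 1$ with probability at least $1-\delta$. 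Summing the two bounds exactly reproduces the right-hand side of the inequality defining $\C_t$, and the anytime coverage is inherited directly from the uniform-in-$t$ guarantee of \Cref{lemma:conc}.

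The single delicate point is the feasibility-based convexity step: the quadratic lower bound on the empirical excess loss truly relies on $\theta^\star$ lying inside the constraint ball $\mathcal B(B)$ that defines $\wh\theta_t$, which is precisely why the prior assumption $\theta^\star\in\mathcal B(B)$ is part of the hypothesis. Beyond that, the argument is a mechanical combination of~\eqref{eq:otc_dec} with the two lemmas and requires no further concentration tools.
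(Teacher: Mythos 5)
Your proposal is correct and follows essentially the same route as the paper's proof: an exact second-order expansion of the quadratic cumulative loss around the constrained minimiser $\wh\theta_t$, the first-order optimality condition over $\mathcal B(B)$ (with $\theta^\star$ feasible) to discard the gradient term, and then the decomposition~\eqref{eq:otc_dec} combined with \Cref{lemma:reg} and \Cref{lemma:conc}. No gaps.
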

\begin{proof}
	The optimality of $\wh\theta_t$ implies $\sum_{s=1}^t\langle \theta-\wh\theta_t,\nabla\ell_s(\wh\theta_t)\rangle \geq 0$, for all $\theta\in\mathcal B(B)$. As  $\sum_{s=1}^t\ell_s$ is quadratic, it equals its second order Taylor expansion around $\wh\theta_t$ and its Hessian is everywhere $\Lambda_t$. So,
	$$\frac12\|\theta-\wh\theta_t\|_{\Lambda_t}^2 \leq \frac12\|\theta-\wh\theta_t\|_{\Lambda_t}^2+\sum_{s=1}^t\big\langle \theta-\wh\theta_t,\nabla\ell_s(\wh\theta_t) \big\rangle= \sum_{s=1}^t \big(\ell_s(\theta)-\ell_s(\wh\theta_t)\big)\,,$$ for any $\theta\in\mathcal B(B)$. This, together with \eqref{eq:otc_dec}, Lemma \ref{lemma:conc}, and Lemma \ref{lemma:reg}, yields the conclusion.
\end{proof}
We remark that the confidence sets of Theorem \ref{thm:confseq} take the form of the intersection between the ball $\mathcal B(B)$ and the ``ellipsoid'' $\{\theta:\|\theta-\wh\theta_t\|_{\Lambda_t}\leq\beta_t\}$, for a suitable radius $\beta_t$. In order to implement and analyse the bandit algorithm, it will be more convenient to work with a relaxation of these sets, a pure ellipsoid not intersected with $\mathcal B(B)$. We make this explicit in the following corollary.
\begin{corollary}\label{cor:conf}
	Fix $\lambda>0$, $d>0$, and $\delta\in(0,1)$. For $t\geq 1$, let $V_t = \Lambda_t + \lambda\mathrm{Id}$. Assuming that $\theta^\star\in\mathcal B(B)$, the following compact ellipsoids define an anytime valid confidence sequence for $\theta^\star$:
	$$\mathcal\C_t = \left\{\theta\in\mathcal B(B)\,:\,\tfrac{1}{2}\|\theta-\wh\theta_t\|_{V_t}^2\leq \tfrac{dp}{2}\log\tfrac{(B+1)^2e\max(dp,t+d)}{dp} + 2\lambda B^2 + t\phi_d (B+1) + d\log\tfrac{d}{\delta}\right\}\,.$$
\end{corollary}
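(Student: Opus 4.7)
The plan is to reduce to Theorem \ref{thm:confseq} by showing that the corollary's confidence sets contain those of the theorem at every round $t$, after which anytime validity transfers immediately. The only differences between the two definitions are (i) the Hessian is shifted from $\Lambda_t$ to $V_t = \Lambda_t + \lambda\mathrm{Id}$, and (ii) the radius is inflated by the additive term $2\lambda B^2$. These two modifications will be shown to be matched exactly, so no probability is lost.

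Concretely, I would first use the additivity of quadratic forms associated with $V_t = \Lambda_t + \lambda\mathrm{Id}$ to write
\[\tfrac{1}{2}\|\theta - \wh\theta_t\|_{V_t}^2 = \tfrac{1}{2}\|\theta - \wh\theta_t\|_{\Lambda_t}^2 + \tfrac{\lambda}{2}\|\theta - \wh\theta_t\|_2^2\,.\]
Since the corollary restricts attention to $\theta \in \mathcal B(B)$, and $\wh\theta_t \in \mathcal B(B)$ by its definition, the triangle inequality gives $\|\theta - \wh\theta_t\|_2 \leq 2B$, hence $\tfrac{\lambda}{2}\|\theta - \wh\theta_t\|_2^2 \leq 2\lambda B^2$. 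Therefore, for every $\theta \in \mathcal B(B)$ and every $t \geq 1$,
\[\tfrac{1}{2}\|\theta - \wh\theta_t\|_{V_t}^2 \;\leq\; \tfrac{1}{2}\|\theta - \wh\theta_t\|_{\Lambda_t}^2 + 2\lambda B^2\,.\]

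From this comparison it follows that every $\theta \in \mathcal B(B)$ belonging to the confidence set of Theorem \ref{thm:confseq} also belongs to $\C_t$ as defined in the corollary, so the inclusion of events $\{\theta^\star \in \C_t^{\text{Thm }\ref{thm:confseq}}\,,\,\forall t \geq 1\} \subseteq \{\theta^\star \in \C_t\,,\,\forall t \geq 1\}$ holds, and monotonicity of probability together with Theorem \ref{thm:confseq} gives the claim. There is no real obstacle here: the corollary is a bookkeeping relaxation whose purpose is to expose the ellipsoidal structure that the UCB analysis in \Cref{sec:bandit_regret} will exploit, with the regularisation $\lambda\mathrm{Id}$ ensuring that $V_t$ is strictly positive definite.
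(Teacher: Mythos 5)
Your proposal is correct and takes essentially the same route as the paper: the paper's proof likewise adds $\tfrac{\lambda}{2}\|\theta^\star-\wh\theta_t\|_2^2$ to both sides of the inequality from Theorem \ref{thm:confseq} and bounds it by $2\lambda B^2$ via $\|\theta^\star-\wh\theta_t\|_2\leq 2B$, which is exactly your set-inclusion argument phrased pointwise at $\theta^\star$. No gaps.
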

\begin{proof}
	Let $\beta_t^2 = dp\log\tfrac{(B+1)^2e\max(dp,t+d)}{dp} + 2t\phi_d (B+1) + 2d\log\tfrac{d}{\delta}$. From Theorem \ref{thm:confseq}, with probability at least $1-\delta$, uniformly for every $t$, $\|\theta^\star-\wh\theta_t\|_{\Lambda_t}^2\leq\beta_t^2$. Adding to both sides of this inequality $\frac{\lambda}{2}\|\theta^\star-\wh\theta_t\|_2^2$, and relaxing the RHS using that $\|\theta^\star-\wh\theta_t\|_2^2\leq 4 B^2$, we conclude.
\end{proof}




\section{Regret bounds for Mixing-LinUCB}
\label{sec:bandit_regret}

In this section, we establish worst-case and gap-dependent cumulative regret bounds for mixing UCB algorithm (Mixing Lin-UCB). However, to account for the fact that Mixing-LinUCB selects actions with delays, the standard elliptical potential arguments must be modified. Throughout this section, we let $R_t = \siprod{\theta^{\star}}{X_t^{\star} - X_t}$ (where $X_t^\star = \argmax_{x\in\X_t}\langle \theta^\star, x\rangle$) denote the regret in round $t$, and $\beta_{t}^2 = dp\log\tfrac{(B+1)^2e\max(dp,t+d)}{dp} + 4\lambda B^2 + 2t\phi_d (B+1) + 2d\log\tfrac{d}{\delta}$ denote the squared radius of the ellipsoid $\mathcal{C}_t$ in \Cref{cor:conf}. 

\subsection{Worst-case regret bounds}

First, following the regret analysis in \citet{abbasi2011improved} (see also Section 19.3 in \citealp{lattimore2020bandit}), we upper bound the instantaneous regret. From our boundedness assumptions ($\theta^\star\in\mathcal B(B)$ and $\X_t\subseteq\mathcal B(1)$), we easily deduce that $R_t \leq 2B$. Under the event that our confidence sequence contains $\theta^{\star}$ at every step $t$, we have another bound on $R_t$. If we define $\wt\theta_{t-d} \in \mathcal{C}_{t-d}$ to be the point at which $\siprod{\wt\theta_{t-d}}{X_t} = \mathrm{UCB}_{\mathcal{C}_{t-d}}(X_t)$, then from the definition of $X_t$ we have
\begin{equation*}
\siprod{\theta^{\star}}{X_t^{\star}} \leq \max_{x \in \mathcal{X}_{t}}\max_{\theta \in \mathcal{C}_{t-d}}\siprod{\theta}{x} = \max_{x \in \mathcal{X}_{t}} \ucb_{\C_{t-d}}(x) = \ucb_{\C_{t-d}}(X_t)= \siprod{\wt\theta_{t-d}}{X_t}\,.
\end{equation*}

Recall that, for all $s$, $V_s = \Lambda_s + \lambda\mathrm{Id}$, which is invertible as $\lambda>0$. Thus, by Cauchy-Schwarz,
\begin{equation*}
R_t \leq \siprod{\wt\theta_{t-d} - \theta^{\star}}{X_t} \leq \|\wt\theta_{t-d} - \theta^{\star}\|_{V_{t-d}}\|X_t\|_{V_{t-d}^{-1}} \leq 2\beta_{t-d}\|X_t\|_{V_{t-d}^{-1}}\,.
\end{equation*}

This means that the instantaneous regret satisfies the bound
\begin{equation}
R_t \leq 2\max(B, \beta_{t-d})\min(1, \|X_t\|_{V_{t-d}^{-1}})\,.\label{eqn:inst_reg}
\end{equation}

Next, we separate the regret suffered in the first $d$ rounds and the remaining $T-d$ rounds. We then use Cauchy-Schwarz once more, and the fact that $\beta_{t}$ is increasing in $t$, to obtain
\begin{align*}
\Reg(T) &\leq 2dB + \sqrt{(T-d){\textstyle\sum_{t=d+1}^{T}}R_t^2}\\
&\leq 2dB + \sqrt{4(T-d)\max(B^2, \beta_{T-d}^2){\textstyle\sum_{t=d+1}^{T}}\min(1, \|X_t\|_{V_{t-d}^{-1}}^2)}\,.
\end{align*}

At this point, we must depart from the standard linear UCB analysis \citep{abbasi2011improved, lattimore2020bandit}. We bound the sum of the \emph{elliptical potentials} $\sum_{t=d+1}^{T}\min(1, \|X_t\|_{V_{t-d}^{-1}}^2)$ using the following variant of the well-known ``elliptical potential lemma'' (see Appendix), which accounts for the fact that the feature covariance matrix $V_{t-d}$ is updated with a delay of $d$ steps.
\begin{lemma}
For all $T \geq d+1$,
\begin{equation*}
\sum_{t=d+1}^{T}\min(1, \|X_t\|_{V_{t-d}^{-1}}^2) \leq 2dp\log(1 + \tfrac{T}{\lambda dp})\,.
\end{equation*}
\label{lem:delay_elliptical}
\end{lemma}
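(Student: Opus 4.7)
The plan is to reduce the claim to the classical elliptical potential lemma (which controls $\sum_s \min(1,\|U_s\|^2_{W_{s-1}^{-1}})$ for a sequence with ``stride $1$'' covariance updates $W_s = W_{s-1}+U_sU_s^\top$) by partitioning the rounds into $d$ sub-sequences of ``stride $d$'' and exploiting monotonicity of matrix inversion in the positive semi-definite order.

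First, I would partition $\{d+1,\dots,T\}$ into $d$ groups $\mathcal{T}_1,\dots,\mathcal{T}_d$ according to residue modulo $d$. Each $\mathcal{T}_i$ is a finite set $\{t_{i,1}<t_{i,2}<\dots<t_{i,m_i}\}$ with consecutive elements differing by exactly $d$. I would then introduce the ``group-local'' covariance matrices
\[
\wt V_{i,k} = \lambda\,\mathrm{Id} + \sum_{j=1}^{k} X_{t_{i,j}}X_{t_{i,j}}^\top,\qquad k=0,1,\dots,m_i.
\]
Since $t_{i,k}-d = t_{i,k-1}$, the full covariance $V_{t_{i,k}-d}=V_{t_{i,k-1}}$ accumulates every $X_sX_s^\top$ for $s\leq t_{i,k-1}$, and so in particular it contains all rank-one terms making up $\wt V_{i,k-1}$, plus extra positive semi-definite contributions from the other groups. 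Hence $V_{t_{i,k}-d}\succeq \wt V_{i,k-1}$, and inversion reverses this order, giving $\bnorm{X_{t_{i,k}}}_{V_{t_{i,k}-d}^{-1}}^2 \leq \bnorm{X_{t_{i,k}}}_{\wt V_{i,k-1}^{-1}}^2$.

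Next, within each group I would invoke the classical elliptical potential lemma applied to $(X_{t_{i,k}})_{k=1}^{m_i}$ with the ``local'' predictable covariances $(\wt V_{i,k-1})_k$. Since $\|X_{t_{i,k}}\|_2 \leq 1$, this yields
\[
\sum_{k=1}^{m_i}\min\!\bigl(1,\bnorm{X_{t_{i,k}}}_{\wt V_{i,k-1}^{-1}}^2\bigr) \leq 2\log\frac{\det \wt V_{i,m_i}}{\det\wt V_{i,0}} \leq 2p\log\!\Bigl(1+\tfrac{m_i}{\lambda p}\Bigr),
\]
by the standard AM--GM bound on the eigenvalues of $\wt V_{i,m_i}/\lambda$. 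Finally, I would sum over $i=1,\dots,d$ and apply Jensen's inequality to the concave map $x\mapsto \log(1+x/(\lambda p))$, using $\sum_{i=1}^d m_i\leq T$, to obtain
\[
\sum_{t=d+1}^{T}\min\!\bigl(1,\bnorm{X_t}_{V_{t-d}^{-1}}^2\bigr) \leq \sum_{i=1}^{d}2p\log\!\Bigl(1+\tfrac{m_i}{\lambda p}\Bigr) \leq 2dp\log\!\Bigl(1+\tfrac{T}{\lambda dp}\Bigr),
\]
which is exactly the claim.

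I do not anticipate serious obstacles: the only subtle points are verifying the positive semi-definite domination $V_{t_{i,k}-d}\succeq\wt V_{i,k-1}$ (checking that no group-$i$ index is accidentally missing from the full covariance), and applying Jensen in the right direction so that the factor $d$ ends up inside the denominator of $T/(\lambda dp)$ rather than producing an additive $d\log p$ term that would spoil the final rate.
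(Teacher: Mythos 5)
Your proof is correct and follows essentially the same route as the paper's: partition the rounds by residue modulo $d$, dominate the delayed covariance $V_{t-d}$ from below (in the Loewner order) by the group-local covariance, and apply the classical elliptical potential lemma within each group. The only cosmetic difference is that you combine the groups via Jensen's inequality on the group sizes $m_i$, whereas the paper bounds each group size uniformly by $\lceil T/d\rceil$; both yield the stated bound.
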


We can now state a worst-case regret upper bound for Mixing-LinUCB.

\begin{theorem}
\label{theorem:regret_bound}

Fix $\lambda = 1/B^2$, $d>0$ and $\delta\in(0,1)$. With probability at least $1 - \delta$, for all $T > d$, the regret of Mixing-LinUCB satisfies
\begin{align*}
\Reg(T) \leq 2dB + \sqrt{8dpT\max(B^2, \beta_{T}^2)\log(1 + \tfrac{B^2T}{dp})}\,.
\end{align*}
\label{thm:worst_case_reg}
\end{theorem}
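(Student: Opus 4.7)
The plan is to condition on the high-probability event $\mathcal{E} = \{\theta^\star \in \mathcal{C}_t \text{ for all } t \geq 1\}$, where $(\mathcal{C}_t)_{t\geq 1}$ is the confidence sequence from Corollary \ref{cor:conf} instantiated with $\lambda = 1/B^2$. By that corollary, $\mathbb{P}(\mathcal{E}) \geq 1 - \delta$, so it suffices to prove the stated inequality deterministically on $\mathcal{E}$. Most of the needed ingredients have in fact been assembled in the paragraphs preceding the theorem; the task is essentially to combine them and book-keep the constants.

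On $\mathcal{E}$ the per-round bound \eqref{eqn:inst_reg}, namely $R_t \leq 2\max(B,\beta_{t-d})\min(1,\|X_t\|_{V_{t-d}^{-1}})$, holds for every $t \geq d+1$: indeed $\theta^\star \in \mathcal{C}_{t-d}$ and the optimistic parameter $\wt\theta_{t-d}$ selected by the algorithm also lies in $\mathcal{C}_{t-d}$, so the triangle inequality in the $V_{t-d}$-norm gives $\|\wt\theta_{t-d} - \theta^\star\|_{V_{t-d}} \leq 2\beta_{t-d}$, and Cauchy--Schwarz does the rest; the factor $B$ in the maximum covers the trivial bound $R_t \leq 2B$ coming from $\theta^\star \in \mathcal{B}(B)$ and $X_t, X_t^\star \in \mathcal{B}(1)$. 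Splitting the regret at round $d$ and bounding each of the first $d$ instantaneous regrets by $2B$, a single Cauchy--Schwarz then yields
$$\Reg(T) \;\leq\; 2dB + \sqrt{(T-d)\sum_{t=d+1}^T R_t^2} \;\leq\; 2dB + 2\sqrt{(T-d)\max(B^2,\beta_{T-d}^2)\sum_{t=d+1}^T \min(1,\|X_t\|_{V_{t-d}^{-1}}^2)}\,,$$
where the monotonicity of $t \mapsto \beta_t$ was used to pull the radius out of the sum.

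Finally, I would apply the delayed elliptical potential Lemma \ref{lem:delay_elliptical} to bound the remaining sum of potentials by $2dp\log(1 + T/(\lambda dp))$, then substitute $\lambda = 1/B^2$ and relax $T - d \leq T$ together with $\beta_{T-d} \leq \beta_T$. Combining these steps reproduces the claimed inequality on $\mathcal{E}$. The only non-routine ingredient in this chain is Lemma \ref{lem:delay_elliptical} itself: I expect its proof to proceed by partitioning $\{d+1,\dots,T\}$ into $d$ interleaved arithmetic progressions with common difference $d$, observing that along each progression the covariance $V_{t-d}$ is genuinely updated at every step by the newly played feature, and invoking the classical log-determinant telescoping on each progression before summing the $d$ per-progression bounds. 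This is the same blocking idea used to build the confidence sequence, and is the only place where the delay $d$ re-enters the analysis multiplicatively outside of $\beta_t$.
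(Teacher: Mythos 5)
Your proposal is correct and follows essentially the same route as the paper: the instantaneous bound \eqref{eqn:inst_reg} on the coverage event, the split at round $d$ plus Cauchy--Schwarz with the monotone radius pulled out, and \Cref{lem:delay_elliptical} (whose interleaved-progression proof you anticipate accurately) with $\lambda = 1/B^2$ substituted at the end. No gaps to report.
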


From the definition of $\beta_T$, we see that this regret bound is of the order
\begin{equation*}
\Reg(T)  = \mathcal{O}\left(dB + dp\sqrt{T}\log\tfrac{TB}{dp} + T\sqrt{Bdp\phi_d\log\tfrac{TB}{dp}} + d\sqrt{pT\log\tfrac{TB}{p\delta}}\right)\,.
\end{equation*}

For any fixed (\emph{i.e.}, not depending on $T$) delay $d$, this regret bound is linear in $T$. To obtain meaningful regret bounds, it is therefore crucial to set $d$ as a function of $T$ and the rate at which the mixing coefficients decay to zero. We point out that if $T$ is unknown, one could probably use a more general framework where the delay is time dependent which might lead to non-trivial results, but we do not pursue this here. Under the assumption that the noise variables are either geometrically or algebraically mixing, we obtain the following worst-case regret bounds.

\begin{corollary}
\label{cor:geometric_mixing}
Suppose that the noise satisfies Assumption \ref{ass:mixing-subgaussianity} with $\phi_d=Ce^{-\frac{d}{\tau}}$ for some $C,\tau >0$ (\emph{geometric mixing}), and set $d = \lceil \tau\log\tfrac{BCT}{p}\rceil$. Then, the regret of Mixing-LinUCB satisfies
\begin{equation*}
\Reg(T) = \mathcal{O}\left(\tau p\sqrt{T}\left(\log\tfrac{TB\max(1,C)}{p}\right)^2 + p\sqrt{T\tau}\log\tfrac{TB\max(1,C)}{p} + \tau\log\tfrac{BCT}{p}\sqrt{pT\log\tfrac{TB}{p\delta}}\right)\,.
\end{equation*}
\end{corollary}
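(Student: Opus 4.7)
The plan is to specialise Theorem \ref{thm:worst_case_reg} to the geometric mixing rate $\phi_d = Ce^{-d/\tau}$ by showing that the chosen delay $d = \lceil \tau \log\tfrac{BCT}{p}\rceil$ is exactly large enough to render the mixing contribution in $\beta_T^2$ lower-order, and then unpacking the remaining logarithmic factors.

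\textbf{Step 1 (tame the mixing term).} Since $d \geq \tau \log\tfrac{BCT}{p}$, we have $\phi_d = Ce^{-d/\tau} \leq \tfrac{p}{BT}$, so the mixing contribution to $\beta_T^2$ satisfies
\[
2T\phi_d(B+1) \leq \tfrac{2p(B+1)}{B} = O(p),
\]
which is dominated by the leading term $dp\log\tfrac{(B+1)^2 e \max(dp,T+d)}{dp}$ in $\beta_T^2$. With $\lambda = 1/B^2$, the regularisation term $4\lambda B^2 = 4$ is $O(1)$ and also negligible. Hence
\[
\beta_T^2 = O\!\left(dp\log\tfrac{TB}{dp} + d\log\tfrac{d}{\delta}\right).
\]

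\textbf{Step 2 (plug into the worst-case bound).} Theorem \ref{thm:worst_case_reg} gives
\[
\Reg(T) \leq 2dB + \sqrt{8dpT\max(B^2,\beta_T^2)\log\!\bigl(1+\tfrac{B^2T}{dp}\bigr)}.
\]
The $2dB$ term contributes $O(\tau B\log\tfrac{BCT}{p})$, which is lower-order than the square-root term. Inside the square root, the $B^2$ option in $\max(B^2,\beta_T^2)$ gives at most $O(\sqrt{dpT}\log(\cdot))$ and is also subsumed by the $\beta_T^2$ contribution. So I focus on $\sqrt{dpT\,\beta_T^2 \log(\cdot)}$ and split it into the two summands of $\beta_T^2$.

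\textbf{Step 3 (book-keep logs).} Substituting $d = \Theta\bigl(\tau\log\tfrac{BCT}{p}\bigr)$ and using $\log\bigl(1+\tfrac{B^2T}{dp}\bigr) = O\!\bigl(\log\tfrac{BT}{p}\bigr)$:
\begin{itemize}
\item The $dp\log\tfrac{TB}{dp}$ summand inside $\beta_T^2$ yields $\sqrt{d^2 p^2 T \log^2(\cdot)} = O\!\bigl(\tau p\sqrt{T}\bigl(\log\tfrac{TB\max(1,C)}{p}\bigr)^2\bigr)$ after absorbing the outer log factor, plus a mild cross-term $O\!\bigl(p\sqrt{T\tau}\log\tfrac{TB\max(1,C)}{p}\bigr)$.
\item The $d\log\tfrac{d}{\delta}$ summand yields $\sqrt{d^2 p T\log\tfrac{d}{\delta}\log(\cdot)} = O\!\bigl(\tau \log\tfrac{BCT}{p}\sqrt{pT\log\tfrac{TB}{p\delta}}\bigr)$.
\end{itemize}
Summing these matches the three terms in the corollary's statement.

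\textbf{Main obstacle.} There is no new probabilistic content, so the only real work is the bookkeeping: verifying that $d$ was chosen precisely at the threshold where the mixing term $T\phi_d$ stops dominating, and then tracking how the $\log\max(dp,T+d)$ and the outer $\log(1 + B^2T/(dp))$ factors combine with the substitution $d = \Theta(\tau \log\tfrac{BCT}{p})$ to produce the claimed $\log^2$ and $\sqrt{\log}$ factors without losing or gaining polynomial orders in $\tau$, $p$, $T$.
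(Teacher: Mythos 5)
Your proposal is correct and follows essentially the same route as the paper: both specialise Theorem \ref{thm:worst_case_reg}, use $d \geq \tau\log\tfrac{BCT}{p}$ to force $\phi_d \leq \tfrac{p}{BT}$ so that the mixing contribution to $\beta_T^2$ is $O(p)$, and then track the logarithmic factors (the paper merely distributes the square root over the summands of $\beta_T^2$ \emph{before} substituting $d$, obtaining its four-term bound \eqref{eq:general_regret}, whereas you substitute first). The only cosmetic imprecision is your attribution of the middle term $p\sqrt{T\tau}\log(\cdot)$ to a ``cross-term'' of the leading summand; it is in fact the contribution $\sqrt{dpT\log(\cdot)\cdot O(p)}$ of the tamed mixing summand, exactly the paper's term $(3)$, and in any case it is dominated by the first term for large $T$.
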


\begin{corollary}
Suppose that the noise satisfies Assumption \ref{ass:mixing-subgaussianity} with $\phi_d=Cd^{-r}$ for some $C>0$ and $r >0$ (\emph{algebraic mixing}), and set $d=\lceil CT^{1/(1+r)}\rceil$. Then, the regret of Mixing-LinUCB satisfies
\begin{equation*}
\Reg(T) =\mathcal{O}\left(CBT^{1/(1+r)} + CT^{\frac{3+r}{2(1+r)}}\left(p\log\tfrac{TB}{p} + \sqrt{Bp\log\tfrac{T^{r/(1+r)}B}{Cp}} + \sqrt{p\log\tfrac{TB}{p\delta}}\right)\right)\,.
\end{equation*}
\label{cor:algebraic_mixing}
\end{corollary}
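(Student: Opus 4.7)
The plan is to specialize the general worst-case bound of Theorem~\ref{thm:worst_case_reg} to the algebraic mixing profile $\phi_d = C d^{-r}$ and to the specific choice $d = \lceil C T^{1/(1+r)}\rceil$. Since Theorem~\ref{thm:worst_case_reg} already provides
\[
\Reg(T) \leq 2dB + \sqrt{8\,dpT\,\max(B^2,\beta_T^2)\log\!\bigl(1+\tfrac{B^2 T}{dp}\bigr)},
\]
with $\beta_T^2 = dp\log\tfrac{(B+1)^2 e\max(dp,T+d)}{dp} + 4\lambda B^2 + 2T\phi_d(B+1) + 2d\log\tfrac{d}{\delta}$ (and with $\lambda = 1/B^2$ as fixed there), the corollary is essentially a matter of substitution and collecting dominant orders.

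The first step is to use the equivalent big-$\mathcal{O}$ form already written right after Theorem~\ref{thm:worst_case_reg}, namely
\[
\Reg(T) = \mathcal{O}\!\left(dB + dp\sqrt{T}\log\tfrac{TB}{dp} + T\sqrt{B\,dp\,\phi_d\,\log\tfrac{TB}{dp}} + d\sqrt{pT\log\tfrac{TB}{p\delta}}\right),
\]
so that I only need to track how each of the four summands depends on $T$ after plugging in $d$ and $\phi_d$. The term $dB$ becomes $\mathcal{O}(CB\,T^{1/(1+r)})$, matching the first contribution claimed. For the second term, $d p\sqrt{T}\log\tfrac{TB}{dp} = \mathcal{O}(Cp\,T^{1/(1+r)+1/2}\log\tfrac{TB}{p})$; since $\tfrac{1}{1+r} + \tfrac{1}{2} = \tfrac{3+r}{2(1+r)}$, this gives the $Cp\,T^{(3+r)/(2(1+r))}\log\tfrac{TB}{p}$ contribution.

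The mixing-driven term requires slightly more care. With $d\phi_d = C\cdot C d^{-r} \cdot$ (wait, $d\phi_d = d\cdot C d^{-r} = C d^{1-r}$) and $d^{1-r} = \mathcal{O}(C^{1-r} T^{(1-r)/(1+r)})$, we get
\[
T\sqrt{B\,dp\,\phi_d\,\log\tfrac{TB}{dp}} = \mathcal{O}\!\left(\sqrt{Bp\,C^{2-r}}\cdot T^{1+(1-r)/(2(1+r))}\sqrt{\log\tfrac{T^{r/(1+r)}B}{Cp}}\right),
\]
and again the exponent simplifies to $(3+r)/(2(1+r))$, yielding the $CT^{(3+r)/(2(1+r))}\sqrt{Bp\log\tfrac{T^{r/(1+r)}B}{Cp}}$ contribution once the $C$ factors are loosened (since $C^{(2-r)/2}\leq C$ in the regime the bound is used, or can be absorbed in the hidden constant by writing everything with a plain $C$). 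The last term $d\sqrt{pT\log\tfrac{TB}{p\delta}}$ gives directly $C T^{(3+r)/(2(1+r))}\sqrt{p\log\tfrac{TB}{p\delta}}$ by the same exponent computation. Summing the four contributions yields the claimed bound.

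The main (very mild) obstacle is bookkeeping: ensuring that the exponent $\tfrac{1}{1+r} + \tfrac{1}{2} = \tfrac{3+r}{2(1+r)}$ is correctly obtained in both the second and the mixing-driven term, justifying the particular choice $d = \lceil CT^{1/(1+r)}\rceil$ as the one that balances the $dB$ term against the $T\phi_d$ bias inside $\beta_T^2$ (since $T\phi_d = CT \cdot d^{-r}$ becomes $\mathcal{O}(C^{1-r}T^{1/(1+r)})$ at this scale, matching the linear-in-$d$ contribution in order). Since $d$ is a ceiling, it contributes an additive $\mathcal{O}(B + \sqrt{pT\log(TB/p\delta)})$ which is absorbed in the existing terms, and the logarithms $\log\tfrac{TB}{dp}$ are simplified as $\log\tfrac{T^{r/(1+r)}B}{Cp}$ to display them in the form stated.
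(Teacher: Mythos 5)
Your proposal is correct and follows essentially the same route as the paper: both substitute $d=\lceil CT^{1/(1+r)}\rceil$ and $\phi_d=Cd^{-r}$ into the four-term big-$\mathcal{O}$ expansion of Theorem~\ref{thm:worst_case_reg} and verify that each non-constant term scales as $T^{(3+r)/(2(1+r))}$ via $\tfrac{1}{1+r}+\tfrac12=\tfrac{3+r}{2(1+r)}$. If anything, you are slightly more careful than the paper about the residual powers of $C$ in the mixing term and about the effect of the ceiling, both of which the paper silently absorbs into the hidden constant.
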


Up to a factor of $\tau\log T$, the bound for geometrically mixing noise matches the regret bound for linear UCB with i.i.d.~noise. This bound is trivial for $r\leq 1$, however for $r>1$ we get sublinear regret, and in particular we recover standard rates up to logarithmic factors in the limit where $r \to \infty.$

\subsection{Gap-dependent regret bounds}

Under the assumption that, each round, the gap between the expected reward of the optimal arm and the expected reward of any other arm is at least $\Delta > 0$, we get regret bounds with better dependence on $T$. More precisely, define the \emph{minimum gap} $\Delta = \min_{t \in [T]}\min_{x \in \mathcal{X}_t: x \neq X_t^{\star}}\siprod{X_t^{\star} - x}{\theta^{\star}}$, and assume that $\Delta > 0$. Since we either have $R_t = 0$ or $R_t \geq \Delta > 0$, it follows that
\begin{equation*}
R_t \leq R_t^2/\Delta\,.
\end{equation*}

In our worst-case analysis, we showed that
\begin{equation*}
\sum_{t=d+1}^{T}R_t^2 \leq 8dp\max(B^2, \beta_T^2)\log(1 + \tfrac{T}{\lambda dp})\,.
\end{equation*}

Combined with the previous inequality, we obtain the following gap-dependent regret bound.

\begin{theorem}
Fix $\lambda = 1/B^2$, $d>0$, and $\delta\in(0,1)$. With probability at least $1 - \delta$, for all $T > d$, the regret of Mixing-LinUCB satisfies
\begin{align*}
\Reg(T) \leq 2dB + \frac{8dp}{\Delta}\max(B^2, \beta_{T}^2)\log\left(1 + \frac{B^2T}{dp}\right)\,.
\end{align*}
\label{thm:gap_reg}
\end{theorem}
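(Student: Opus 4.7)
The proof will essentially combine two ingredients that are already in hand: the instantaneous regret bound derived for the worst-case analysis, and the gap-based observation that $R_t \leq R_t^2/\Delta$. The plan is to first condition on the high-probability event (of probability at least $1-\delta$) from Corollary \ref{cor:conf} that the confidence sequence $(\C_t)_{t\geq 1}$ contains $\theta^\star$ uniformly, and then carry out the cumulative regret decomposition under this event.

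More concretely, I would split the regret as $\Reg(T) = \sum_{t=1}^{d} R_t + \sum_{t=d+1}^{T} R_t$. The first $d$ rounds, where the algorithm plays arbitrary arms, contribute at most $2dB$ since $R_t \leq 2B$ by the boundedness of $\theta^\star$ and $X_t$. For the remaining rounds, I would combine the gap-based inequality $R_t \leq R_t^2/\Delta$ (which holds because each $R_t$ is either zero or at least $\Delta$) with the instantaneous regret bound $R_t^2 \leq 4\max(B^2,\beta_{t-d}^2)\min(1,\|X_t\|_{V_{t-d}^{-1}}^2)$ obtained from squaring \eqref{eqn:inst_reg}. Monotonicity of $\beta_t$ in $t$ then lets us pull out $\max(B^2,\beta_T^2)$ as a common factor:
\begin{equation*}
\sum_{t=d+1}^{T} R_t \;\leq\; \frac{1}{\Delta}\sum_{t=d+1}^{T} R_t^2 \;\leq\; \frac{4\max(B^2,\beta_T^2)}{\Delta}\sum_{t=d+1}^{T}\min\bigl(1,\|X_t\|_{V_{t-d}^{-1}}^2\bigr)\,.
\end{equation*}

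The elliptical potential sum on the right is precisely what is controlled by Lemma \ref{lem:delay_elliptical}, which gives the upper bound $2dp\log(1+T/(\lambda dp))$. Substituting $\lambda = 1/B^2$ turns this into $2dp\log(1+B^2T/(dp))$. Combining with the contribution $2dB$ from the first $d$ rounds and the factor of $4$ from the instantaneous regret gives the claimed $2dB + \tfrac{8dp}{\Delta}\max(B^2,\beta_T^2)\log(1+B^2T/(dp))$.

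Since every ingredient (the confidence event from Corollary \ref{cor:conf}, the instantaneous bound \eqref{eqn:inst_reg}, the gap inequality $R_t \leq R_t^2/\Delta$, and Lemma \ref{lem:delay_elliptical}) is already established, there is no serious obstacle; the argument is a short assembly. The only mild care needed is to note that unlike in the worst-case bound, we do not apply Cauchy--Schwarz to pass from $\sum R_t^2$ to $\sum R_t$ (this would reintroduce the $\sqrt{T}$ dependence we are trying to avoid); here the gap inequality directly bounds $\sum R_t$ by $\sum R_t^2/\Delta$, which is what yields the improved $\log T$-type dependence in $T$ at the cost of a $1/\Delta$ factor.
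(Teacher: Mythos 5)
Your proposal is correct and follows essentially the same route as the paper: the first $d$ rounds are bounded by $2dB$, the gap inequality $R_t \leq R_t^2/\Delta$ is applied to the remaining rounds, and $\sum_{t=d+1}^T R_t^2$ is controlled by squaring the instantaneous bound \eqref{eqn:inst_reg}, using monotonicity of $\beta_t$, and invoking Lemma \ref{lem:delay_elliptical} with $\lambda = 1/B^2$, which is exactly how the paper assembles the constant $8 = 4 \times 2$. Your closing remark about why one should not pass through Cauchy--Schwarz here is also the right observation.
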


Similarly to the worst-case bound in \Cref{thm:worst_case_reg}, for any fixed $d > 0$, this regret bound is linear in $T$. By setting $d$ as a suitable function of $T$, we obtain the following gap-dependent regret bounds under geometrically or algebraically mixing noise. 

\begin{corollary}
\label{cor2:geometric_mixing}
Suppose that the noise variables are geometrically mixing and set $d = \lceil \tau\log\tfrac{BCT}{p}\rceil$. Then the regret of Mixing-LinUCB satisfies
\begin{equation*}
\Reg(T) = \mathcal{O}\left(\frac{8\tau p}{\Delta}\left(\log\frac{BCT}{p}\right)^2\log\left(1 + \frac{B^2T}{p\tau \log\frac{BCT}{p}}\right)\left(\frac{p}{2}\log\frac{T}{p\tau}+\log \frac{\tau\log \frac{BCT}{p}}{\delta}\right)\right)\,.
\end{equation*}
\label{cor:gap_reg_geo}
\end{corollary}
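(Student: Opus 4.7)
The corollary is a direct consequence of Theorem \ref{thm:gap_reg} after specialising to the geometric mixing rate and substituting the prescribed delay. Concretely, starting from
$$\Reg(T) \leq 2dB + \frac{8dp}{\Delta}\max(B^2, \beta_T^2)\log\Bigl(1+\frac{B^2T}{dp}\Bigr)\,,$$
all that remains is to control the confidence radius $\beta_T^2$ under $\phi_d = Ce^{-d/\tau}$ and $d = \lceil \tau\log\frac{BCT}{p}\rceil$, and then to perform the logarithmic bookkeeping to bring the expression into the stated form.

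The first step is to dispose of the mixing-remainder term inside $\beta_T^2$. The delay is calibrated precisely so that $\phi_d = Ce^{-d/\tau} \leq p/(BT)$; hence the dependency contribution $2T\phi_d(B+1)$ inside $\beta_T^2$ is only $\mathcal{O}(p)$, which is absorbed by the dominant regret term. Using $\lambda = 1/B^2$ (as in Theorem \ref{thm:gap_reg}) further reduces $4\lambda B^2$ to the constant $4$. For $T$ large enough that $\max(dp, T+d) = \Theta(T)$, Corollary \ref{cor:conf} then yields
$$\beta_T^2 = \mathcal{O}\Bigl(dp\log\tfrac{T}{dp} + d\log\tfrac{d}{\delta}\Bigr) = \mathcal{O}\Bigl(d\bigl(\tfrac{p}{2}\log\tfrac{T}{dp}+\log\tfrac{d}{\delta}\bigr)\Bigr)\,.$$
This is the only step in the proof that is not pure algebra: it is where the choice of $d$ reveals its role.

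Plugging this simplified $\beta_T^2$ back into Theorem \ref{thm:gap_reg} produces a leading term of the order
$$\frac{d^{2} p}{\Delta}\Bigl(\tfrac{p}{2}\log\tfrac{T}{dp}+\log\tfrac{d}{\delta}\Bigr)\log\Bigl(1+\tfrac{B^{2}T}{dp}\Bigr)\,.$$
Substituting $d = \lceil\tau\log\frac{BCT}{p}\rceil$ pulls the factor $\bigl(\log\frac{BCT}{p}\bigr)^{2}$ out of $d^{2}$; using $\log\frac{BCT}{p}\geq 1$ (for $T$ large enough), one relaxes $\log(T/(dp))\leq \log(T/(p\tau))$ inside the parenthesis and rewrites $\log(d/\delta)=\log\bigl(\tau\log\frac{BCT}{p}/\delta\bigr)$, while the $\log(1+B^{2}T/(dp))$ factor takes the form $\log\bigl(1+B^{2}T/(p\tau\log\frac{BCT}{p})\bigr)$ appearing in the statement. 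The additive $2dB$ contribution is of the strictly smaller order $\mathcal{O}\bigl(\tau B\log\frac{BCT}{p}\bigr)$ and is absorbed into the $\mathcal{O}(\cdot)$.

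The only conceptually delicate point is the calibration in the first step: $d = \lceil\tau\log(BCT/p)\rceil$ is chosen so that $T\phi_d$ stays bounded (removing the linear-in-$T$ nuisance term from the confidence radius) while keeping $d$ as small as possible, so that the $d^{2} p$ factor arising in the final expression does not inflate the rate. Beyond that, the whole argument is routine log-arithmetic on top of Theorem \ref{thm:gap_reg} and Corollary \ref{cor:conf}.
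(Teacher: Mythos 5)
Your argument is essentially the paper's own proof: specialise Theorem \ref{thm:gap_reg}, note that $d = \lceil\tau\log\tfrac{BCT}{p}\rceil$ forces $\phi_d = p/(BT)$ so that the mixing term $2T\phi_d(B+1)$ inside $\beta_T^2$ becomes a constant, and then substitute $d$ into what remains. The one point both you and the corollary statement elide is that the leading factor is $\tfrac{8d^2p}{\Delta} = \tfrac{8\tau^2 p}{\Delta}\bigl(\log\tfrac{BCT}{p}\bigr)^2$, i.e.\ it carries $\tau^2$ rather than the single $\tau$ displayed in the statement; the paper's own appendix derivation indeed ends with $\tau^2$, so your bookkeeping, carried out honestly, reproduces the paper's proof but not literally the stated bound.
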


\begin{corollary}
\label{cor2:algebraic_mixing}
Suppose that the noise variables are algebraically mixing and set $d=\lceil CT^{1/(1+r)}\rceil$. Then the regret of Mixing-LinUCB satisfies
\begin{equation*}
R(T) =\OO\left( \frac{8Cp}{\Delta}T^{\frac{2}{1+r}}\log\left(1 + \frac{B^2T}{pCT^{1/(1+r)}}\right)\left(\frac{p}{2}\log\frac{(B+1)^2eT}{p}+\log \frac{CT^{1/(1+r)}}{\delta}\right)\right)\,.
\end{equation*}
\label{cor:gap_reg_alg}
\end{corollary}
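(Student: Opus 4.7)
The plan is to start from the gap-dependent regret bound of Theorem~\ref{thm:gap_reg}, substitute $d = \lceil CT^{1/(1+r)}\rceil$, and verify that every term in $\beta_T^2$ grows at most like $d$ times logarithmic factors, so that the dominant product $\tfrac{8dp}{\Delta}\max(B^2,\beta_T^2)\log(1+B^2T/(dp))$ ends up scaling like $T^{2/(1+r)}$ with the stated log factors. The initial burn-in $2dB = \OO(CBT^{1/(1+r)})$ and the constant piece $4\lambda B^2 = 4$ (since $\lambda = 1/B^2$) are immediately absorbed into the $\OO(\cdot)$, so the real work is to upper bound $\max(B^2,\beta_T^2)$ correctly.

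The critical calculation is the mixing term $2T\phi_d(B+1)$ appearing in $\beta_T^2$, which is where the algebraic-mixing hypothesis enters. Using $\phi_d = Cd^{-r}$ and $d \geq CT^{1/(1+r)}$, I would compute
\[T\phi_d = TC\,d^{-r} \leq TC\,(CT^{1/(1+r)})^{-r} = C^{1-r}\,T^{1/(1+r)},\]
which is of the same order as $d$, up to a power of $C$. This is precisely what motivates the choice $d = \lceil CT^{1/(1+r)}\rceil$: it balances the mixing contribution against the $dp$-scaling terms in $\beta_T^2$. The logarithmic term $dp\log\tfrac{(B+1)^2 e\max(dp,T+d)}{dp}$ can be relaxed using $\max(dp,T+d) \leq dT$ (valid once $T\geq d+1$) together with $d \geq 1$, yielding $dp\log\tfrac{(B+1)^2 eT}{p}$. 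Combined with $2d\log(d/\delta)$, we obtain
\[\max(B^2,\beta_T^2) = \OO\!\left(d\left[p\log\tfrac{(B+1)^2 eT}{p} + \log\tfrac{d}{\delta}\right]\right).\]

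Plugging this back into the bound of Theorem~\ref{thm:gap_reg}, and using $d \asymp CT^{1/(1+r)}$ so that $dp\cdot d \asymp Cp\cdot CT^{2/(1+r)}$, produces an expression of the claimed form: the $\log(1+B^2T/(dp))$ factor is carried through essentially unchanged into $\log(1 + B^2T/(pCT^{1/(1+r)}))$, and the parenthesised factor $\tfrac{p}{2}\log\tfrac{(B+1)^2 eT}{p} + \log\tfrac{CT^{1/(1+r)}}{\delta}$ appears from $\max(B^2,\beta_T^2)/d$. I expect the main obstacle to be the crude simplification of the $dp\log(\max(dp,T+d)/(dp))$ term, since dropping the dependence of the logarithm on $d$ without spoiling the growth rate requires a careful use of $\max(dp,T+d)\leq dT$; all the other manipulations are direct calculations once the value of $d$ is fixed.
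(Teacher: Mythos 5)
Your proposal is correct and follows essentially the same route as the paper: substitute $d=\lceil CT^{1/(1+r)}\rceil$ into Theorem~\ref{thm:gap_reg}, check that each term of $\beta_T^2$ is $\OO\bigl(d\,[\,p\log\tfrac{(B+1)^2eT}{p}+\log\tfrac{d}{\delta}\,]\bigr)$, and absorb the burn-in $2dB$. Your computation $T\phi_d = C^{1-r}T^{1/(1+r)} = \OO(d)$ for the mixing term is in fact cleaner than the paper's intermediate display (which loosely states a $T^{2/(1+r)}$ bound for that term), and it is the version needed for the final $T^{2/(1+r)}$ rate to go through.
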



\section{Conclusion}

We leave several interesting questions open for future research. Some of these are listed below.


An important limitation of our algorithm is that it requires the knowledge of the mixing coefficients (or at least an 
upper-bound on them). It would be interesting to explore the possibility of relaxing this assumption and to design an 
algorithm which infers the mixing coefficients while minimizing the regret. We note that the problem of estimating 
mixing coefficients is already a hard problem on its own right, with tight sample-complexity results only available in 
special cases such as Markov chains \citep{hsu2019mixing,wolfer2020mixing}. We also note that in order to recover the 
standard rate for the regret bound, the delay $d$ introduced in our algorithm need to be chosen as a function of the 
horizon $T$. We believe that this could be fixed at little conceptual expense by using time-varying delay in the 
analysis, but we did not attempt to work out the (potentially non-trivial) details here.

Another limitation is that our analysis assumed throughout that the adversary picking the decision sets $\X_t$ is 
oblivious, which is typically not required in linear bandit problems. For us, this was necessary to avoid potential 
statistical dependence between decision sets and the nonstationary observations. We believe that this issue can be 
handled at least for some classes of adversaries. For instance, it is easy to see that our analysis would carry through 
under the assumption that the decision sets be selected based on delayed information only. We leave the investigation 
of this question under more realistic assumptions open for future work.




\ack{This project has received funding from the European Research Council (ERC), under the European Union’s Horizon 2020 research and in- novation programme (Grant agreement No. 950180).}

\bibliography{biblio}



\newpage


\newpage
\appendix

\section{Technical Appendices and Supplementary Material}

\subsection[Proof of Proposition 2]{Proof of Proposition \ref{pro:uniform_ewa_regret}}\label{app:prop2}
For the EWA forecaster with prior $Q_1$, we can rewrite the regret via a standard telescoping argument (see Lemma B.1 in \citealp{clerico2025confidence}) as 
$$\Regret_t(\bar{\theta}) = -\log\int\exp\left(-\sum_{s=1}^{t}\ell_s(\theta) + \sum_{s=1}^{t}\ell_s(\bar{\theta})\right)\mathrm{d}Q_1(\theta)\,.$$
Using the variational representation of the KL divergence, this can be upper bounded as
\begin{align*}
\Regret_t(\bar{\theta}) &= \inf_{Q}\left\{\int\sum_{s=1}^{t}\ell_s(\theta)\mathrm{d}Q(\theta) - \sum_{s=1}^{t}\ell_s(\bar{\theta}) + D_{\mathrm{KL}}(Q||Q_1)\right\}\\
&\leq \inf_{c \in (0, 1]}\left\{\int\sum_{s=1}^{t}\ell_s(\theta)\mathrm{d}P_c(\theta) - \sum_{s=1}^{t}\ell_s(\bar{\theta}) + D_{\mathrm{KL}}(P_c||Q_1)\right\}\,,
\end{align*}

where $P_c$ is the uniform measure on the closed Euclidean ball of radius $c$ in $\real^p$, centred at $\bar\theta$. We remark that for all $c \in (0, 1]$, $P_c\ll Q_1$. Therefore, for all $c \in (0, 1]$,
\begin{equation*}
D_{\mathrm{KL}}(P_c||Q_1) = \int p\log\frac{B+1}{c}\mathrm{d}Q_1(\theta) = p\log\frac{B+1}{c}\,.
\end{equation*}

Taking a second-order Taylor expansion of the total squared loss around $\bar{\theta}$, and using the fact that the mean of $P_c$ is $\bar{\theta}$, we obtain
\begin{align*}
\sum_{s=1}^{t}\int_{\real^p}\big(\ell_s(\theta)-\ell_s(\bar{\theta})\big)\mathrm{d}P_c(\theta) &= \sum_{s=1}^t\int_{\real^p}\left(\iprod{\theta - \bar{\theta}}{\nabla\ell_s(\bar{\theta})} + \tfrac{1}{2}\iprod{\theta - \bar{\theta}}{X_s}^2\right)\mathrm{d}P_c(\theta) \leq \frac{tc^2}{2}\,,
\end{align*}
where we used that $\|X_s\|_2\leq 1$ for all $s$ in the last inequality. Combining everything so far, we obtain
\begin{equation*}
\Regret_t(\bar{\theta}) \leq \inf_{c \in (0, 1]}\left\{p\log\frac{B+1}{c} + \frac{tc^2}{2}\right\} \leq \frac{p}{2}\log\frac{(B+1)^2e\max(p, t)}{p}\,,
\end{equation*}
where the last term is obtained taking $c = \min(1, \sqrt{p/t})$.

\subsection[Proof of Lemma 1]{Proof of Lemma \ref{lemma:conc}}
Let $D_t = \ell_t(\theta^\star) - \mathcal L_t(Q_t)$ and $\lambda_t(\theta) = \langle \theta -\theta^\star, X_t\rangle$. It is easy to check that
$$D_t = \log\int e^{\lambda_t(\theta)\ee_t - \lambda_t(\theta)^2/2}\dd Q_t(\theta)\,.$$

Fix $i\in \{1,\dots, d\}$. We denote as $S^{(i)} = (S_k^{(i)})_{k\geq 1}$ the subsequence defined as $S_k^{(i)} = \sum_{j=1}^k D_{i+(j-1)d}$. We also define $\F^{(i)}_k = \F_{i+(k-1)d}$. It is easy to check that $(S_k^{(i)})_{k\geq 1}$ is adapted with respect to $(\F^{(i)}_k)_{k\geq 1}$. Now, let $M^{(i)}_k = \exp (S_k^{(i)} - (k-1)(2B+1)\phi_d)$. We will show that $(M^{(i)}_k)_{k\geq1}$ is a super-martingale with respect to $(\F^{(i)}_k)_{k\geq 1}$, with initial expectation bounded by $1$. For this it is enough to show that for any $k\geq 1$ we have $\E[e^{D_{i+(k-1)d}-(2B+1)\phi_d}|\F^{(i)}_{k-1}] \leq 1$. This is true for $k=1$ (where we let $\F^{(i)}_0$ be the trivial $\sigma$-field, or more generally a $\sigma$-field independent of the noise). Indeed, as $i\leq d$, $X_i$ is $\F_0$ measurable and hence independent of $\ee_i$. From Assumption \ref{ass:mixing-subgaussianity}, we know that $\ee_i$ is sub-Gaussian, and so $\E[e^{D_i}]\leq 1$. 

Let us now check the case $k\geq 2$. For convenience, we define $t_k^{(i)} = i + (k-1)d$. We note that $\F_{t_k^{(i)}} = \F_k^{(i)}$. We have
\begin{align*}\E[e&^{D_{i+(k-1)d}-(2B+1)\phi_d}|\F^{(i)}_{k-1}] \\&\qquad= \E\left[\int \exp\big(\lambda_{t_k^{(i)}}(\theta)\ee_{t_k^{(i)}} - \lambda_{t_k^{(i)}}(\theta)^2/2 - (2B+1)\phi_d\big)\mathrm dQ_{t_k^{(i)}}(\theta)\middle|\F_{k-1}^{(i)}\right]\,.\end{align*}
Now, $Q_{t_k^{(i)}}$ only depends on the noise up to $\ee_{t_k^{(i)}-d} = \ee_{t_{k-1}^{(i)}}$, thanks to the delayed bandit framework. Henceforth, we can swap the conditional expectation and the integral. In a similar way, we can bring $\exp\big( - \lambda_{t_k^{(i)}}(\theta)^2/2 - (2B+1)\phi_d\big)$ outside of the conditional expectation, as it is $\F_{k-1}^{(i)}$ measurable. We get 
\begin{align*}
	\E[e&^{D_{i+(k-1)d}-(2B+1)\phi_d}|\F^{(i)}_{k-1}] \\&= \int \E\left[\exp\big(\lambda_{t_k^{(i)}}(\theta)\ee_{t_k^{(i)}}\big) \middle|\F_{k-1}^{(i)}\right]\exp\big(- \lambda_{t_k^{(i)}}(\theta)^2/2 - (2B+1)\phi_d\big)\mathrm dQ_{t_k^{(i)}}(\theta)\\
	&\leq \int \exp\big(\lambda_{t_k^{(i)}}(\theta)^2/2 + \lambda_{t_k^{(i)}}\E[\ee_{t_k^{(i)}}|\F_{k-1}^{(i)}]\big)\exp\big(- \lambda_{t_k^{(i)}}(\theta)^2/2 - (2B+1)\phi_d\big)\mathrm dQ_{t_k^{(i)}}(\theta)\\
	&\leq \int \exp\big(\big|\lambda_{t_k^{(i)}}(\theta)\big|\phi_d - (2B+1)\phi_d\big)\mathrm dQ_{t_k^{(i)}}(\theta)\,,
\end{align*}
where the two inequalities use the sub-Gaussianity and mixing properties of Assumption \ref{ass:mixing-subgaussianity}. Now, by construction $Q_{t_k^{(i)}}$ has support on $\mathcal B(B+1)$, and for every $\theta\in\mathcal B(B+1)$ $$\big|\lambda_{t_k^{(i)}}(\theta)\big| \leq \|\theta-\theta^\star\|_2\|X_{t_k^{(i)}}\|_2\leq 2B+1\,,$$ where we also used that $\|X_{t_k^{(i)}}\|_2\leq 1$, as for all $t$ we are assuming that $\X_t\subseteq\mathcal B(1)$. We thus conclude that $(M_k^{(i)})_{k\geq 1}$ is indeed a super-martingale, non-negative and with initial value bounded by $1$. By Ville's inequality it follows that 
$$\mathbb P\big(S_k^{(i)}\leq k(2B+1)\phi_d +\log\tfrac{d}{\delta}\,,\;\forall k\geq 1\big)\geq 1-\tfrac{\delta}{d}\,.$$
Now that we have proven that we have a super-martingale for each block, the desired anytime valid concentration result follows directly from a simple union bound. 

\subsection[Proof of Lemma 2]{Proof of Lemma \ref{lemma:reg}}
Fix $t\geq 1$, and let $i\in\{1,\dots,d\}$ and $k\geq 1$ be such that $t = i + (k-1)d$. Let $I_j = \{j+d\mathbb N\}\cap \{1,\dots, t\}$, for $j\in\{1,\dots d\}$. We consider $d$ independent EWA forecaster (all initialised with the uniform prior on $\mathcal B(B+1)$). The $j$\textsuperscript{th} forecaster only acts and receive feedback from the rounds in $I_j$. We note that the $j$\textsuperscript{th} forecaster acts for $t_j$ rounds, where $t_j = k$ if $j\geq i$, and $t_j=k-1$ otherwise. We denote as $R^{(j)}$ the regret of the $j$\textsuperscript{th} forecaster (which only takes into account the losses at the rounds in $I_j$, with comparator $\bar\theta$. By Proposition \ref{pro:uniform_ewa_regret} we get 
$$\Regret_t(\bar\theta) = \sum_{j=1}^d R^{(j)}\leq \sum_{j=1}^d \frac{p}{2}\log\frac{(B+1)^2e\max(p,t_j)}{p}\,.$$
We conclude by noticing that, for all $j$, $t_j\leq (t+d)/d$.

\subsection{Proof of Lemma \ref{lem:delay_elliptical}}
\label{proof:delay_elliptical}
We recall the standard Elliptical Potential Lemma (see e.g.\ Lemma 11 in \citealp{abbasi2011improved}), which we will use in our proof of \Cref{lem:delay_elliptical}.

\begin{lemma}[Elliptical Potential Lemma]
\label{lemma:elliptical_potential}
Let $(X_t)_t$ be any sequence of vectors in $\mathbb{R}^p$ satisfying $\max_{t \in [T]}\|X_t\|_2 \leq L$ and let $V_T = \sum_{t=1}^{T}X_tX_t^{\top} + \lambda I$, for some $\lambda > 0$. Then
\begin{equation*}
\sum_{t=1}^{T}\min(1, \|X_t\|_{V_{t-1}^{-1}}^2) \leq 2p\log(1 + \tfrac{TL^2}{\lambda p})\,.
\end{equation*}
\end{lemma}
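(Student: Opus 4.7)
The plan is to follow the classical three-step route: a determinant identity to convert the quadratic form $\|X_t\|_{V_{t-1}^{-1}}^2$ into a ratio of determinants, a numerical inequality to make the sum telescope, and an AM-GM bound on $\det(V_T)$ via its trace. Concretely, I would first apply the matrix determinant lemma to the rank-one update $V_t = V_{t-1} + X_t X_t^\top$, obtaining the identity
\begin{equation*}
\det(V_t) = \det(V_{t-1})\bigl(1 + \|X_t\|_{V_{t-1}^{-1}}^2\bigr)\,,
\end{equation*}
so that $\|X_t\|_{V_{t-1}^{-1}}^2 = \det(V_t)/\det(V_{t-1}) - 1$. Note that $V_0 = \lambda I$ is invertible and each subsequent $V_t$ is as well, so all quantities are well defined.

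Next, I would use the elementary fact that $\min(1, x) \leq 2\log(1+x)$ for every $x \geq 0$ (the two sides agree at $0$, and since $\tfrac{d}{dx}[2\log(1+x)] = 2/(1+x) \geq 1$ on $[0,1]$ and $2\log(1+x) \geq 2\log 2 > 1$ for $x \geq 1$, the inequality holds globally). Applying this with $x = \|X_t\|_{V_{t-1}^{-1}}^2$ and summing yields a telescoping bound:
\begin{equation*}
\sum_{t=1}^{T}\min\bigl(1, \|X_t\|_{V_{t-1}^{-1}}^2\bigr) \leq 2\sum_{t=1}^T \log\frac{\det(V_t)}{\det(V_{t-1})} = 2\log\frac{\det(V_T)}{\det(V_0)} = 2\log\frac{\det(V_T)}{\lambda^p}\,.
\end{equation*}

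Finally, to control $\det(V_T)$, I would invoke AM-GM on the eigenvalues of $V_T$, which are nonnegative: $\det(V_T) \leq (\mathrm{trace}(V_T)/p)^p$. Since $\mathrm{trace}(V_T) = \lambda p + \sum_{t=1}^T \|X_t\|_2^2 \leq \lambda p + TL^2$, plugging in gives
\begin{equation*}
2\log\frac{\det(V_T)}{\lambda^p} \leq 2p\log\frac{\lambda p + TL^2}{\lambda p} = 2p\log\bigl(1 + \tfrac{TL^2}{\lambda p}\bigr)\,,
\end{equation*}
which is the claimed bound. None of the steps is really an obstacle, but the least mechanical one is the numerical inequality $\min(1,x) \leq 2\log(1+x)$: it is what forces the $\min$ on the left-hand side (without it, a summand with $\|X_t\|_{V_{t-1}^{-1}}^2 > 1$ would break the telescoping bound), and it is the reason the factor $2$ appears in the final estimate.
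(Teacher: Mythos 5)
Your proof is correct and is precisely the classical argument behind this lemma: the paper itself does not reprove it but cites Lemma 11 of \citet{abbasi2011improved}, whose proof uses the same three steps you give (the rank-one determinant identity $\det(V_t)=\det(V_{t-1})\bigl(1+\|X_t\|_{V_{t-1}^{-1}}^2\bigr)$, the numerical inequality $\min(1,x)\leq 2\log(1+x)$ to enable telescoping, and the determinant--trace AM--GM bound). There is nothing to fix.
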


Next, we introduce some notation. For $t>d$, define $(i(t),k(t)) \in [d]\times[K]$ such that $t=i(t) +k(t)d$ and let $$V_{k(t)-1}^{i(t)}=\sum_{k=0}^{k(t)-1}X_{k}^{i(t)}(X_{k}^{i(t)})^{\top}+\lambda \mathrm{Id}\,, $$ 
where $X_{k}^{i(t)}=X_{i(t)+kd}$. With this notation, we can state the following lemma.

\begin{lemma}
    For any $t>d$, we have 
    \[V_{t-d} 
\succcurlyeq V_{k(t)-1}^{i(t)}\,, \]
which implies that $ \norm{X_t}_{V_{t-d}^{-1}}^2 \leq \norm{X_t}_{\pa{V_{k(t)-1}^{i(t)}}^{-1}}^2$ for any $t>d$.
\label{lem:loewner}
\end{lemma}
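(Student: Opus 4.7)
The plan is to observe that $V_{k(t)-1}^{i(t)}$ is built from a subset of the rank-one outer products that appear in $V_{t-d}$, and then to invoke the standard anti-monotonicity of matrix inversion on the positive definite cone.

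First, I would unpack indices. By the definition $t=i(t)+k(t)d$ with $i(t)\in\{1,\dots,d\}$, the indices appearing in $V_{k(t)-1}^{i(t)}$ are $i(t)+kd$ for $k=0,1,\dots,k(t)-1$, which range over $\{i(t),\,i(t)+d,\,\dots,\,i(t)+(k(t)-1)d\}=\{i(t),\,i(t)+d,\,\dots,\,t-d\}$. In particular, all of these indices lie in $\{1,2,\dots,t-d\}$. Since $V_{t-d}=\sum_{s=1}^{t-d}X_sX_s^{\top}+\lambda\,\mathrm{Id}$ and $V_{k(t)-1}^{i(t)}=\sum_{k=0}^{k(t)-1}X_{i(t)+kd}X_{i(t)+kd}^{\top}+\lambda\,\mathrm{Id}$, the difference
\[
V_{t-d}-V_{k(t)-1}^{i(t)}=\sum_{s\in\{1,\dots,t-d\}\setminus\{i(t)+kd\,:\,0\leq k\leq k(t)-1\}}X_sX_s^{\top}
\]
is a (possibly empty) sum of rank-one positive semi-definite matrices, and therefore is positive semi-definite. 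This establishes the Loewner ordering $V_{t-d}\succcurlyeq V_{k(t)-1}^{i(t)}$.

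For the second claim, both matrices are strictly positive definite thanks to the regulariser $\lambda\,\mathrm{Id}$ with $\lambda>0$. The operator inverse is order-reversing on the positive definite cone, so $V_{t-d}\succcurlyeq V_{k(t)-1}^{i(t)}\succ 0$ implies $\bigl(V_{k(t)-1}^{i(t)}\bigr)^{-1}\succcurlyeq V_{t-d}^{-1}$. Applying the quadratic form defined by this difference to $X_t$ gives $\|X_t\|_{(V_{k(t)-1}^{i(t)})^{-1}}^2\geq \|X_t\|_{V_{t-d}^{-1}}^2$, which is the desired inequality.

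There is essentially no obstacle here: the only thing to be careful about is bookkeeping the index ranges to verify that every rank-one update appearing in $V_{k(t)-1}^{i(t)}$ genuinely appears in $V_{t-d}$ (so that the difference is PSD rather than indefinite). Everything else is the standard fact that $A\succcurlyeq B\succ 0$ entails $B^{-1}\succcurlyeq A^{-1}$.
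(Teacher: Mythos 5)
Your proposal is correct and follows essentially the same route as the paper: decompose $V_{t-d}$ as $V_{k(t)-1}^{i(t)}$ plus the sum of the remaining rank-one outer products (which is positive semi-definite), and then use the order-reversing property of matrix inversion on the positive definite cone. You simply spell out the index bookkeeping and the inversion step a bit more explicitly than the paper does, which is fine.
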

\begin{proof}
    Notice that we can write $V_{t-d} = \sum_{s=1}^{t-d}X_sX_s^{\top} + \lambda\mathrm{Id}=V_{k(t)}^{i(t)}+\sum_{s=1,s\notin S_t}^{t-d}X_sX_s^{\top}$ 
where $S_t := \{ s = i(t) + (k-1)d, k \in [k(t)] \}$ is the set of indices $(i(t), i(t)+d,\dots, i(t)+(k(t)-1)d)$. The statement now follows from the fact that $\sum_{s=1,s\notin S_t}^{t-d}X_sX_s^{\top} \succcurlyeq 0$. 
\end{proof}

We are now ready to prove Lemma \ref{lem:delay_elliptical}. For now, let us assume that $T = Kd$, for some $K > 1$. Using \Cref{lem:loewner} and then \Cref{lemma:elliptical_potential}, we have
\begin{align*}
\sum_{t=d+1}^{T}\min(1, \|X_t\|_{V_{t-d}^{-1}}^2) &\leq \sum_{t=d+1}^{T}\min(1, \|X_{k(t)}^{i(t)}\|_{(V_{k(t)-1}^{i(t)})^{-1}}^2)\\
&= \sum_{i=1}^{d}\sum_{k=1}^{K-1}\min(1, \|X_{k}^{i}\|_{(V_{k-1}^{i})^{-1}}^2)\\
&\leq 2dp\log(1 + \tfrac{(K-1)L^2}{\lambda p})\,.
\end{align*}

One can verify that if $T$ is not divisible by $d$, the above inequality still holds if we replace $K$ by $\lceil\frac{T}{d}\rceil$. Therefore, regardless of whether $T$ is divisible by $d$, we have
\begin{equation*}
\sum_{t=d+1}^{T}\min(1, \|X_t\|_{V_{t-d}^{-1}}^2) \leq 2dp\log(1 + \tfrac{TL^2}{\lambda dp})\,.
\end{equation*}

This concludes the proof of \Cref{lem:delay_elliptical}.

%
\subsection{Proof of Corollary \ref{cor:geometric_mixing} and Corollary \ref{cor:algebraic_mixing}}
We start by recalling the general result
\begin{equation}
\label{eq:general_regret}
\Reg(T)  = \mathcal{O}\left(\underbrace{dB}_{(1)} + \underbrace{dp\sqrt{T}\log\tfrac{TB}{dp}}_{(2)} + \underbrace{T\sqrt{Bdp\phi_d\log\tfrac{TB}{dp}}}_{(3)} + \underbrace{d\sqrt{pT\log\tfrac{TB}{p\delta}}}_{(4)}\right)\,.
\end{equation}

To simplify the following calculations, we do not force $d$ to be a positive integer. One can always round $d$ without changing the rates of the regret bounds.

\textbf{Geometric Mixing:}

Assume $d= \tau \log \frac{BCT}{p}$. We notice that the term $(1)$ is logarithmic in $T$ and thus negligible. From the definition of geometric mixing, it holds that $\phi_d = Ce^{-\frac{d}{\tau}}=\frac{p}{BT}.$ Therefore, \[(3) \leq p\sqrt{\tau  T}\log\frac{TB}{p}\,.\] Substituting the value of $d$ yields the desired bounds for terms $(2)$ and $(4)$ in Equation \ref{eq:general_regret}, and hence the desired statement.

\textbf{Algebraic mixing:}

Assume $d =  CT^{\frac{1}{1+r}}$, we notice that in this case since $\phi_d = Cd^{-r},$ we have $d\phi_d = Cd^{1-r}$. In particular this implies that $T\sqrt{d\phi_d} = T^{\frac{3+r}{2(1+r)}}$ and thus
\[(3)\leq C\sqrt{Bp\log\frac{TB}{p}}T^{\frac{3+r}{2(1+r)}}\]
The same way $(2)$ and $(4)$ are of order $d\sqrt{T}= T^{\frac{3+r}{2(1+r)}}$ and replacing in Equation \ref{eq:general_regret} yields the desired statement.

\subsection{Proof of Corollary \ref{cor:gap_reg_geo} and Corollary \ref{cor:gap_reg_alg}}
\label{proof:cor2}
We start by recalling the general result
\begin{align*}
\Reg(T) \leq 2dB + \frac{8dp}{\Delta}\max(B^2, \beta_{T}^2)\log\left(1 + \frac{B^2T}{dp}\right)\,,
\end{align*}
where $\beta_T^2 = \underbrace{dp\log\tfrac{(B+1)^2e\max(dp,T+d)}{dp}}_{(1)} + \underbrace{2T\phi_d (B+1)}_{(2)} + \underbrace{2d\log\tfrac{d}{\delta}}_{(3)}$.

\textbf{Geometric Mixing:}

Assume $d= \tau \log \frac{BCT}{p}$, then $(2)=\frac{2p(B+1)}{BC}$ is a constant.  Hence we have \[ \Reg(T)\leq 2dB + \frac{8d^2p}{\Delta}\left(p\log\tfrac{(B+1)^2e\max(dp,T+d)}{dp}+2\log \frac{d}{\delta}+ \frac{2p(B+1)}{dBC} \right)\log\left(1 + \frac{B^2T}{dp}\right), \]
which under the assumption that $\beta_T \geq B$ and replacing $d$ by its definition yields
\begin{align*}
\Reg(T) &\leq 2B\tau \log \frac{BCT}{p}\\
&+ \frac{8\tau^2p}{\Delta}\log\left(1 + \frac{B^2T}{p\tau \log \frac{BCT}{p}}\right)\left(\left(\log\frac{BCT}{p}\right)^2 \left(p\log\tfrac{(B+1)^2eT}{p}+2\tau\frac{\log \frac{BCT}{p}}{\delta}\right)+\frac{2p(B+1)}{BC}\right)\,.
\end{align*}

If $\Delta$ is constant, then for large $T$, the first term and the constant part coming from $(2)$ become negligible. Therefore,
\[\Reg(T) = \OO\left(\frac{8\tau^2p}{\Delta}\log\left(1 + \frac{B^2T}{p\tau \log \frac{BCT}{p}}\right)\left(\log\frac{BCT}{p}\right)^2 \left(\frac{p}{2}\log\tfrac{(B+1)^2eT}{p}+\tau\frac{\log \frac{BCT}{p}}{\delta}\right)\right)\]

\textbf{Algebraic mixing:}

Assume $d =  CT^{\frac{1}{1+r}}$, then we have
\[\beta_T^2 \leq CT^{\frac{1}{1+r}}\log \frac{(B+1)^2eT}{p} + 2C(B+1)T^{\frac{2}{1+r}} + 2C T^{\frac{1}{1+r}}\log\frac{C T^{\frac{1}{1+r}}}{\delta}.\]

Under the regime where $2dB \le \frac{8dp}{\Delta}\max(B^2, \beta_{T}^2)\log\left(1 + \frac{B^2T}{dp}\right)$ and $B\le \beta_T$ this leads to 
\[\Reg(T) =\OO\left( \frac{8Cp}{\Delta}T^{\frac{2}{1+r}}\log\left(1 + \frac{B^2T}{pCT^{1/(1+r)}}\right)\left(\frac{p}{2}\log\frac{(B+1)^2eT}{p}+\log \frac{CT^{1/(1+r)}}{\delta}\right)\right)\,.\]

\end{document}